\DeclareMathAlphabet{\pazocal}{OMS}{zplm}{m}{n}
\DeclareMathOperator{\supp}{supp}
\DeclareMathOperator{\cv}{conv}
\newcommand{\C}{\ensuremath{\pazocal{C}}}
\newcommand{\N}{\ensuremath{\mathbb{N}}}
\newcommand{\R}{\ensuremath{\mathbb{R}}}
\newcommand{\brbm}{\ensuremath{\mathsf{B}\textup{-}\mathsf{RBM}}}
\newcommand{\dbn}{\ensuremath{\mathsf{DBN}}}
\newcommand{\cH}{\ensuremath{\mathcal{H}}}
\newcommand{\cZ}{\ensuremath{\pazocal{Z}}}
\newcommand{\cX}{\ensuremath{\pazocal{X}}}
\renewcommand{\t}{\ensuremath{\mathfrak{t}}}
\newcommand{\1}{\ensuremath{\mathds{1}}}
\newcommand{\Expec}[1]{\mathbb{E}\left[#1\right]}
\newcommand{\prob}{\ensuremath\mathbb{P}}
\renewcommand{\geq}{\geqslant}
\renewcommand{\leq}{\leqslant}
\crefname{proposition}{proposition}{propositions}
\Crefname{proposition}{Proposition}{Propositions}
\newtheorem{theorem}{Theorem}[section]
\newtheorem{remark}[theorem]{Remark}
\newtheorem{corollary}[theorem]{Corollary}
\newtheorem{lemma}[theorem]{Lemma}
\newtheorem{proposition}[theorem]{Proposition}
\newtheorem{example}[theorem]{Example}
\begin{document}
	
\title{ \vspace{-2.5cm} Quantitative Universal Approximation Bounds for Deep Belief Networks}	
	
\author{%
Julian Sieber \footnote{j.sieber19@imperial.ac.uk}  \\
Department of Mathematics, Imperial College London, United Kingdom \and
	Johann Gehringer \footnote{j.gehringer18@imperial.ac.uk} \\
	{Department of Mathematics, Imperial College London, United Kingdom}
}

\maketitle

\begin{abstract}%
\noindent
  We show that deep belief networks with binary hidden units can approximate any multivariate probability density under very mild integrability requirements on the parental density of the visible nodes. The approximation is measured in the $L^q$-norm for $q\in[1,\infty]$ ($q=\infty$ corresponding to the supremum norm) and in Kullback-Leibler divergence. Furthermore, we establish sharp quantitative bounds on the approximation error in terms of the number of hidden units.
\end{abstract}

{\noindent\scriptsize {\it  Keywords:}   Deep belief network, restricted Boltzmann machine, universal approximation property, expressivity, Kullback-Leibler approximation, probability density}


\section{Introduction}

Deep belief networks (DBNs) are a class of generative probabilistic models  obtained by stacking several restricted Boltzmann machines (RBMs, \cite{Smolensky1986}). For a brief introduction to RBMs and DBNs we refer the reader to the survey articles \cite{fischer2012introduction,Fischer2014,Montufar2018,Ghojogh2021}.  Since their introduction, see \cite{Hinton2006-1,Hinton2006-2}, DBNs have been successfully applied to a variety of problems in the domains of natural language processing \cite{Hinton2009,Mingyang2018}, bioinformatics  \cite{Yuhao2013,Liang2015,Renzhi2016,Ping2019}, financial markets \cite{Shen2015} and computer vision \cite{Zaher2015,Kamada2018,Kamada2019,Huang2019}. However, our theoretical understanding of the class of continuous probability distributions, which can be approximated by them, is limited. The ability to approximate a broad class of probability distributions---usually referred to as \emph{universal approximation property}---is still an open problem for DBNs with real-valued visible units. As a measure of proximity between two real-valued probability density functions, one typically considers the $L^q$-distance or the Kullback-Leibler divergence.

\paragraph{Contributions.} In this article we study the approximation properties of deep belief networks for multivariate continuous probability distributions which have a density with respect to the Lebesgue measure. We show that, as $m\to\infty$, the universal approximation property holds for binary-binary DBNs with two hidden layers of sizes $m$ and $m+1$, respectively. Furthermore, we provide an explicit quantitative bound on the approximation error in terms of $m$. More specifically, the main contributions of this article are:

\begin{itemize}
	\item For each $q\in[1,\infty)$ we show that DBNs with two binary hidden layers and parental density $\varphi:\R^d\to\R_+$ can approximate any probability density $f:\R^d\to\R_+$ in the $L^q$-norm, solely under the condition that $f,\varphi \in L^q(\R^d)$. In addition, we prove that the error admits a bound of order $\pazocal{O}\big(m^{1-\frac{1}{\min(q,2)}}\big)$ for each $q\in(1,\infty)$, where $m$ is the number of hidden neurons. 
	
	\item If the target density $f$ is uniformly continuous and the parental density $\varphi$ is bounded, we provide an approximation result in the $L^{\infty}$-norm (also known as supremum or uniform norm).  
	
	\item Finally, we show that continuous target densities supported on a compact subset of $\R^d$ and uniformly bounded away from zero can be approximated by deep belief networks with bound\-ed parental density in Kullback-Leibler divergence. The approximation error in this case is of order $\pazocal{O}\big(m^{-1}\big)$.
\end{itemize}

\paragraph{Related works.} One of the first approximation results for deep belief networks is due to \cite{Hinton2008} and states that any probability distribution on $\{ 0,1\}^d$ can be learnt by a DBN with $3\times 2^d$ hidden layers of size $d+1$ each. This result was improved by \cite{LeRoux2010,Montufar2011} by reducing the number of layers to $\frac{2^{d-1}} {d-\log(d)}$ with $d$ hidden units each. These results, however, are limited to discrete probability distributions. Since most applications involve continuous probability distributions, \cite{Krause2013} considered Gaussian-binary DBNs and analyzed their approximation capabilities in Kullback-Leibler divergence, albeit without a rate. In addition, they only allow for target densities that can be written as an infinite mixture of a set of probability densities satisfying certain conditions, which appear to be hard to check in practice. 

Similar questions have been studied for a variety of neural network architectures: The famous results of \cite{Cybenko1989,Hornik1989} state that deterministic multi-layer feed-forward networks are universal approximators for a large class of Borel measurable functions, provided that they have at least one sufficiently large hidden layer. See also the articles \cite{Leshno1993,Chen1995,Barron1993,Burger2001}.
\cite{LeRoux2008} proved the universal approximation property for RBMs and  discrete target distributions. \cite{Montufar2015-1} established the universal approximation property for discrete restricted Boltzmann machines. \cite{Montufar2015-2} showed the universal approximation property for deep narrow Boltzmann machines. \cite{Montufar2015-3} showed that Markov kernels can be approximated by shallow stochastic feed-forward networks with exponentially many hidden units. \cite{Bengio2011,Pascanu2013} studied the approximation properties of so-called deep architectures. \cite{Merkh2019} investigated the approximation properties of stochastic feed-forward neural networks. 

The recent work \cite{Johnson2019} nicely complements the aforementioned results by obtaining an illustrative negative result: Deep narrow networks with hidden layer width at most equal to the input dimension do not posses the universal approximation property.

\section{Deep Belief Networks}

A restricted Boltzmann machine (RBM) is a an undirected, probabilistic, graphical model with bipartite vertices that are fully connected with the opposite class. To be more precise, we consider a simple, planar graph $\pazocal{G}=(\pazocal{V},\pazocal{E})$ for which the vertex set $\pazocal{V}$ can be partitioned into sets $V$ and $H$ such that the edge set is given by $\pazocal{E}=\big\{\{s,t\}:\,s\in V,\,t\in H\big\}$. We call vertices in $V$ \emph{visible} units; $H$ contains the \emph{hidden} units. To each of the visible units we associate the state space $\Omega_V$ and to the hidden ones we associate $\Omega_H$. We equip $\pazocal{G}$ with a \emph{Gibbs probability measure}
\begin{equation*}
    \pi(v,h)=\frac{e^{- \cH(v,h)}}{\cZ},\qquad v\in(\Omega_V)^V,\,h\in(\Omega_H)^H,
\end{equation*}
where $\cH:(\Omega_V)^V\times(\Omega_H)^H\to\R$ is chosen such that $\cZ=\iint e^{-\cH(v,h)}\,dv\,dh<\infty$. Notice that the integral becomes a sum if $\Omega_V$ (resp. $\Omega_H$) is a discrete set. It is customary to identify the RBM with the probability measure $\pi$.

An important example are \emph{binary-binary} RBMs. These are obtained by choosing $\Omega_V=\Omega_H=\{0,1\}$ and 
\begin{equation}\label{eq:rbm_density}
  \cH=\braket{v,Wh}+\braket{v,b}+\braket{h,c},\quad v\in\{0,1\}^V,\,h\in\{0,1\}^H,
\end{equation}
where $b\in\{0,1\}^V$ and $c\in\{0,1\}^H$ are called \emph{biases}, and $W\in\R^{V\times H}$ is called the \emph{weight matrix}. We shall write for $m,n\in\N$,
\begin{equation}\label{eq:rbm_set}
    \brbm(m,n)=\left\{\pi\textup{ is a binary-binary RBM with $m$ visible and $n$ hidden units}\right\},
\end{equation}
for the set of binary-binary RBMs with fixed layer sizes.

The following discrete approximation result is well known, see also \cite{Montufar2011}:

\begin{proposition}[{\cite{LeRoux2008}, Theorem 2}]\label{prop:approx_leroux}
    Let $m\in\N$ and $\mu$ be a probability distribution on $\{0,1\}^m$. Let 
    \begin{equation*}
        \supp(\mu)=\big\{v\in\{0,1\}^m:\,\mu(v)>0\big\}
    \end{equation*}
    be the support of $\mu$. Set $n=\big|\supp(\mu)\big|+1$. Then, for each $\varepsilon>0$, there is a $\pi\in\brbm(m,n)$ such that
  \begin{equation*}
      \left|\mu(v)-\sum_{h\in\{0,1\}^{n}}\pi(v, h)\right|\leq\varepsilon\qquad\forall\,v\in\{0,1\}^m.
  \end{equation*}
\end{proposition}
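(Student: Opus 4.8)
The plan is to reduce everything to the product form of the visible marginal and then to realise each point of $\supp(\mu)$ by a single hidden unit acting as a sharply localised ``spike''. Summing out the hidden layer in \eqref{eq:rbm_density} (the hidden units are conditionally independent given $v$, and each $h_j\in\{0,1\}$), the visible marginal of a $\pi\in\brbm(m,n)$ is
\begin{equation*}
  \sum_{h\in\{0,1\}^n}\pi(v,h)=\frac{1}{\cZ}\,e^{-\langle v,b\rangle}\prod_{j=1}^{n}\bigl(1+e^{-(\langle v,w_j\rangle+c_j)}\bigr),\qquad \cZ=\sum_{u\in\{0,1\}^m} e^{-\langle u,b\rangle}\prod_{j=1}^{n}\bigl(1+e^{-(\langle u,w_j\rangle+c_j)}\bigr),
\end{equation*}
where $w_j$ denotes the $j$-th column of $W$. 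The whole proof then amounts to choosing the columns $w_j$, the hidden biases $c_j$ and $b$ so that the right-hand side is $\approx\mu(v)$.

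First I would fix an enumeration $\supp(\mu)=\{v^{(1)},\dots,v^{(N)}\}$ with $N=|\supp(\mu)|$, set $b=0$, and attach one hidden unit to each $v^{(i)}$. For a scale parameter $\lambda>0$, let the $i$-th column have entries $(w_i)_k=-\lambda$ when $v^{(i)}_k=1$ and $(w_i)_k=+\lambda$ when $v^{(i)}_k=0$. Then $v\mapsto\langle v,w_i\rangle$ is uniquely minimised at $v=v^{(i)}$, with every other $v$ exceeding the minimum by at least $\lambda$. Consequently the factor $1+e^{-(\langle v,w_i\rangle+c_i)}$ equals $1+e^{-\alpha_i}$ at $v=v^{(i)}$ (writing $\alpha_i=\langle v^{(i)},w_i\rangle+c_i$) and is squeezed between $1$ and $1+e^{-\alpha_i}e^{-\lambda}$ at every other $v$. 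I would then tune the biases through a second parameter $T>0$ by imposing $e^{-\alpha_i}=T\mu(v^{(i)})$, i.e.\ $c_i=\lambda|v^{(i)}|-\log\!\bigl(T\mu(v^{(i)})\bigr)$, so that the $i$-th spike has height $1+T\mu(v^{(i)})$ and encodes the target mass. (These $N=|\supp(\mu)|$ units already suffice; the remaining unit in $\brbm(m,N+1)$ may be left inert, e.g.\ with $w\equiv0$ and $c\to+\infty$, contributing a factor $\equiv1$.)

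The decisive step is a coordinated two-parameter limit. For fixed $T$ and $\lambda\to\infty$ the contamination bound above gives, uniformly in the $2^m$ visible states,
\begin{equation*}
  \prod_{j=1}^{N}\bigl(1+e^{-(\langle v,w_j\rangle+c_j)}\bigr)\;\longrightarrow\;\begin{cases}1+T\mu(v^{(i)}), & v=v^{(i)}\in\supp(\mu),\\[2pt] 1, & v\notin\supp(\mu),\end{cases}
\end{equation*}
whence $\cZ\to 2^m+T$. Sending $T\to\infty$ afterwards yields $\sum_h\pi(v^{(i)},h)\to\mu(v^{(i)})$ and $\sum_h\pi(v,h)\to 0$ for $v\notin\supp(\mu)$. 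Choosing first $T$ large (depending on $\varepsilon,m,\mu$) and then $\lambda$ large (depending on $T$) then drives the maximal error below $\varepsilon$.

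I expect the main obstacle to be exactly this interplay of the two limits: the spikes must be made arbitrarily tall (large $T$) to swamp the off-support mass, yet for the tall spikes to remain localised one needs the weights $\lambda$ to grow faster than $\log T$, so the limits cannot be interchanged and must be nested. A secondary point requiring care is that all estimates be uniform over the exponentially many ($2^m$) visible configurations; in particular the product of the $N-1$ ``nearly-one'' factors at each point must be controlled by a bound such as $(1+Te^{-\lambda})^{N}\to1$, so that the per-point error on $\supp(\mu)$ and the total leakage onto its complement vanish simultaneously.
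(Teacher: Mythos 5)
The paper does not prove this proposition at all: it is imported verbatim from Le~Roux and Bengio (Theorem~2 of \cite{LeRoux2008}), so there is no internal proof to compare against. Your argument is, as far as I can check, a correct and self-contained derivation. The product form of the visible marginal is right for the energy \eqref{eq:rbm_density}, the weight vector $(w_i)_k=-\lambda(2v^{(i)}_k-1)$ does make $v^{(i)}$ the unique minimiser of $v\mapsto\langle v,w_i\rangle$ with a gap of at least $\lambda$, and the nested limit (first $T$, then $\lambda$, with the contamination controlled uniformly by $(1+Te^{-\lambda})^{N}$ and $\cZ$ squeezed between $2^m+T$ and $(2^m+T)(1+Te^{-\lambda})^{N}$) does drive the maximal error over all $2^m$ configurations below $\varepsilon$. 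Your route differs from the original: Le~Roux--Bengio build the RBM inductively, showing that appending one hidden unit to an existing RBM lets one reweight the probability of a single chosen configuration, and then peel off the support points one at a time; you instead give a one-shot construction with all weights and biases written explicitly, which is arguably more transparent and makes the uniformity of the error easier to track, at the cost of having to manage the two coupled asymptotic parameters by hand. Two small remarks: the spare $(N+1)$-st hidden unit needs no limit $c\to+\infty$ at all --- with $w\equiv 0$ its factor $1+e^{-c}$ is constant in $v$ and cancels between numerator and $\cZ$ for any finite $c$; and your construction tacitly requires real-valued biases, which is surely intended (the paper's stipulation $b\in\{0,1\}^V$, $c\in\{0,1\}^H$ in \eqref{eq:rbm_density} must be a typo, since \cref{prop:approx_leroux} is false for biases restricted to $\{0,1\}$).
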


A deep belief network (DBN) is constructed by stacking two RBMs. To be more precise, we now consider a tripartite graph with hidden layers $H_1$ and $H_2$ and visible units $V$. We assume that the edge set is now given by $\pazocal{E}=\big\{\{s,t_1\},\{t_1,t_2\}:\,s\in V,\,t_1\in H_1,\,t_2\in H_2\big\}$. The state spaces are now $\Omega_V=\R$ and $\Omega_{H_1}=\Omega_{H_2}=\{0,1\}$. We think of edges in the graph as dependence of the neurons (in the probabilistic sense). The topology of the graph hence shows that the vertices in $V$ and $H_2$ shall be conditionally independent, that is, we require that the probability distribution of a DBN satisfies
\begin{equation}\label{eq:dbn_equation}
    p(v,h_1,h_2)=p(v\,|\,h_1)p(h_1,h_2).
\end{equation}
The joint density of the hidden units $p(h_1,h_2)$ will be chosen as binary-binary RBM. 

Let $\mathscr{D}(\R^d)=\big\{f:\R^d\to\R_+:\,\int_{\R^d}f(x)\,dx=1\big\}$ be the set of probability densities on $\R^d$. For $\varphi\in\mathscr{D}(\R^d)$ and $\sigma>0$ we set
\begin{equation}\label{eq:visible_density}
  \mathscr{V}_\varphi^{\sigma}=\left\{\varphi_{\mu,\sigma}=\sigma^{-d}\varphi\left(\frac{x-\mu}{\sigma}\right):\,\mu\in\R^d\right\}.
\end{equation}
Notice that all elements of $\mathscr{V}_\varphi^{\sigma}$ are themselves probability distributions. We fix a \emph{parental density} $\varphi\in\mathscr{D}(\R^{|V|})$ and choose the conditional density in \eqref{eq:dbn_equation} as $p(\cdot\,|\,h_1)\in\mathscr{V}_\varphi^{\sigma}$ for each $h_1\in H_1$.

\begin{example}\label{ex:visible_density}
    The most popular choice of the parental function $\varphi$ in \eqref{eq:visible_density} is the $d$-dimensional standard Gaussian density
    \begin{equation}\label{eq:gaussian}
        \varphi(x)=\frac{1}{(2\pi)^{d/2}}\exp\left(-\frac{|x|^2}{2}\right),\qquad x\in\R^d.
    \end{equation}
    Another density considered in previous works is the truncated exponential distribution 
    \begin{equation}\label{eq:exponential}
        \varphi(x)=\prod_{i=1}^d\frac{\lambda_i e^{-\lambda_i x_i}}{1-e^{-b_i\lambda_i}}\1_{[0,b_i]}(x_i),\qquad x=(x_1,\dots,x_d)\in\R^d,
    \end{equation}
    where $b_i, \lambda_i >0$ for each $i=1,\dots,d$.
\end{example}

Similar to \eqref{eq:rbm_set}, we collect all DBNs in the set
\begin{align*}
    \dbn_\varphi(d,m,n)=\Big\{&p\textup{ is a DBN with parental density $\varphi$, $d$ visible units, $m$ hidden}\\  
    &\textup{ units on the first level, and $n$ hidden units on the second level}\Big\},
\end{align*}
where $\varphi\in\mathscr{D}(\R^d)$ and $d,m,n\in\N$.

\section{Main Results}

To state the results of this article, we need to introduce three bits of additional notation:

Let $q\in[1,\infty)$. The space of $q$-summable functions is denoted by
\begin{equation*}
    L^q(\R^d)=\left\{f:\R^d\to\R:\,\|f\|_{L^q}=\left(\int_{\R^d}\big|f(x)\big|^q\,dx\right)^{\frac1q}<\infty\right\}.
\end{equation*}
We also set
\begin{equation*}
    L^\infty(\R^d)=\left\{f:\R^d\to\R:\,\|f\|_{L^\infty}=\sup_{x\in\R^d}\big|f(x)\big|<\infty\right\}.
\end{equation*}
It is convenient to declare $\mathscr{D}_q(\R^d)=\mathscr{D}(\R^d)\cap L^q(\R^d)$. Finally, let us abbreviate the constant
\begin{equation}\label{eq:cq}
    \Upsilon_q=\max\left(1,\frac{1}{\sqrt{2\pi}}\int_{-\infty}^\infty|x|^q e^{-\frac{x^2}{2}}\,dx\right)^{\frac1q}=\begin{cases}
        1 & q\leq 2, \\
        \displaystyle\frac{\sqrt{2}}{\pi^{\frac{1}{2q}}}\Gamma\left(\frac{q+1}{2}\right), & q>2,
    \end{cases}
\end{equation}
with the Gamma function $\Gamma(x)=\int_0^\infty t^{x-1}e^{-t}\,dt$, $x>0$. 

The main results of this paper are stated in the following two theorems:

\begin{theorem}\label{thm:approximation_dbn}
    Let $q\in[1,\infty)$ and $f,\varphi\in\mathscr{D}_q(\R^d)$. Then, for any $\varepsilon>0$, there is an $M\in\N$ such that, for each $m\geq M$, we can find a $p\in\dbn_\varphi(d,m,m+1)$ satisfying
    \begin{equation*}
        \|f-p\|_{L^q}\leq\varepsilon.
    \end{equation*}
    If $q\in(1,\infty)$, then, for each $m\in\N$, the following quantitative bound holds:
    \begin{equation}\label{eq:lp_bound}
    \inf_{p\in\dbn_\varphi(d,m,m+1)}\|f-p\|_{L^q}\leq \frac{2 \Upsilon_q\|\varphi\|_{L^q}}{m^{1-\frac{1}{\min(q,2)}}},
    \end{equation}
    where the constant $\Upsilon_q$ is defined in \eqref{eq:cq}.
\end{theorem}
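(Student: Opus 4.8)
\section*{Proof proposal}

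The plan is to first collapse the deep-belief-network approximation to a pure location-mixture problem, and then run a Maurey-type empirical argument to obtain the rate. Summing out the top layer in \eqref{eq:dbn_equation} gives the visible marginal $p(v)=\sum_{h_1\in\{0,1\}^m}\varphi_{\mu(h_1),\sigma}(v)\,p_{H_1}(h_1)$, so every $p\in\dbn_\varphi(d,m,m+1)$ realises an $m$-component mixture $\sum_i w_i\varphi_{\mu_i,\sigma}$ whose weight vector is the visible marginal of the hidden RBM $p(h_1,h_2)\in\brbm(m,m+1)$. By \Cref{prop:approx_leroux}, an RBM with $m$ visible and $n=m+1=|\supp|+1$ hidden units reproduces, up to an arbitrarily small sup-error, any distribution on $\{0,1\}^m$ with support size at most $m$; since the bumps $\varphi_{\mu_i,\sigma}$ are fixed $L^q$-functions (and the means at the unused configurations can be set arbitrarily), this discretisation error is $o(1)$ in $L^q$ and drops out of the infimum. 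Hence it suffices to bound the error of approximating $f$ by $m$-term mixtures $\sum_{i=1}^m w_i\varphi_{\mu_i,\sigma}$ with $w_i\geq0$, $\sum_i w_i=1$, $\mu_i\in\R^d$ and $\sigma>0$ free.

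For the qualitative statement I would use mollification: since $f\in L^q(\R^d)$ and $\varphi$ is a probability density, the continuous mixture $\varphi_\sigma*f=\int_{\R^d}\varphi_{\mu,\sigma}\,f(\mu)\,d\mu$ converges to $f$ in $L^q$ as $\sigma\to0$ for every $q\in[1,\infty)$, by continuity of translation in $L^q$; truncating and Riemann-summing the mixing measure $f(\mu)\,d\mu$ into at most $m$ atoms then gives the $\varepsilon$-approximation, including $q=1$. For the quantitative bound I would make the discretisation random: draw $\mu_1,\dots,\mu_m$ i.i.d.\ from $f$ and set $p=\tfrac1m\sum_i\varphi_{\mu_i,\sigma}$, so that $\expec{p}=\varphi_\sigma*f=:g$. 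Writing $X_i=\varphi_{\mu_i,\sigma}-g$ (i.i.d., mean zero), Rademacher symmetrisation supplies the factor $2$,
\begin{equation*}
\Expec{\Big\|\tfrac1m\sum_{i=1}^m X_i\Big\|_{L^q}}\leq\frac{2}{m}\,\Expec{\Big\|\sum_{i=1}^m\varepsilon_i X_i\Big\|_{L^q}},
\end{equation*}
and the sharp Khintchine inequality, whose optimal constant is exactly $\Upsilon_q=\max(1,\|Z\|_{L^q})$ from \eqref{eq:cq} with $Z$ standard normal, applied pointwise in $v$ yields $\mathbb{E}_\varepsilon\big[|\sum_i\varepsilon_i X_i(v)|^q\big]\leq\Upsilon_q^q\big(\sum_i X_i(v)^2\big)^{q/2}$. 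Integrating in $v$ and handling $\sum_i X_i^2$ by the triangle inequality in $L^{q/2}$ when $q\geq2$, and by subadditivity of $t\mapsto t^{q/2}$ when $q<2$, extracts the factor $m^{1/2}$ respectively $m^{1/q}$; after dividing by $m$ this is precisely the exponent $1-\tfrac1{\min(q,2)}$, with residual radius controlled by $\|\varphi_\sigma\|_{L^q}$. Since the expected error dominates the infimum, a deterministic realisation attaining at most the mean exists.

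The step I expect to be the crux is reconciling the two scales. The quantisation radius $\|\varphi_\sigma\|_{L^q}=\sigma^{-d/q'}\|\varphi\|_{L^q}$ (with $q'$ the conjugate exponent) is small for large $\sigma$, whereas the mollification bias $\|f-\varphi_\sigma*f\|_{L^q}$ forces $\sigma$ small; the clean prefactor $2\Upsilon_q\|\varphi\|_{L^q}$ in \eqref{eq:lp_bound} corresponds to pinning the radius at unit scale. The delicate part is therefore to organise the estimate---exploiting the rescaling identity $\|f-p\|_{L^q}=\sigma^{-d/q'}\|\tilde f_\sigma-\tilde p\|_{L^q}$ for $\tilde f_\sigma(y)=\sigma^d f(\sigma y)$ together with an appropriate choice of $\sigma$---so that the bias is absorbed and only the quantisation term survives in the stated form. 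The case split $q\gtrless2$ (type $2$ versus type $q$ of $L^q$) and the use of the optimal Khintchine constant are what pin down the sharp rate exponent $1-\tfrac1{\min(q,2)}$ and the precise factor $\Upsilon_q$.
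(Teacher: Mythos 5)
Your overall architecture is the same as the paper's: reduce the DBN to an $m$-component location mixture via \cref{prop:approx_leroux}, handle the qualitative statement by mollification ($f\star\varphi_\sigma\to f$ in $L^q$), and obtain the rate by a Maurey-type argument over $\cv_m(\mathscr{V}_\varphi^\sigma)$. The one genuine difference is that you prove the Maurey bound from scratch (i.i.d.\ sampling of the means from $f$, symmetrisation, sharp Khintchine with constant $\Upsilon_q$, and the $q\gtrless 2$ case split), whereas the paper invokes the abstract result of \cref{prop:abstract_convex} for Banach spaces of Rademacher type $\min(q,2)$ together with Haagerup's sharp constant, after establishing $f\star\varphi_\sigma\in\overline{\cv}(\mathscr{V}_\varphi^\sigma)$ by a Hahn--Banach separation argument. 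Your route is self-contained and makes the origin of $\Upsilon_q$ and of the exponent transparent; the paper's is shorter. Either way the qualitative statement (including $q=1$) goes through.

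For the quantitative bound \eqref{eq:lp_bound}, however, your proposal stops exactly at the step that needs proving, and I do not see how to close it along these lines. As you correctly compute, the Maurey radius scales like $\sup_\mu\|\varphi_{\mu,\sigma}-f\star\varphi_\sigma\|_{L^q}\asymp\sigma^{-d(1-1/q)}\|\varphi\|_{L^q}$, which diverges as $\sigma\downarrow 0$ for every $q>1$ and $d\geq 1$, while the bias $\|f-f\star\varphi_\sigma\|_{L^q}$ is only known to be $o(1)$ as $\sigma\downarrow 0$, with no rate under the sole hypothesis $f\in\mathscr{D}_q(\R^d)$. Optimising over $\sigma$ therefore does not yield a $\sigma$-free constant: the two terms pull $\sigma$ in opposite directions, and the rescaling identity you mention only relocates the problem (it rescales $f$ along with $p$), so ``pinning the radius at unit scale'' is not compatible with sending the bias to zero. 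For what it is worth, the paper's own proof passes this point by asserting $\|\varphi_{\mu,\sigma}\|_{L^q}=\sigma\|\varphi\|_{L^q}$ and hence $\sup_\mu\|f\star\varphi_\sigma-\varphi_{\mu,\sigma}\|_{L^q}\leq 2\|\varphi\|_{L^q}$ uniformly in $\sigma$; this contradicts the correct scaling $\|\varphi_{\mu,\sigma}\|_{L^q}=\sigma^{-d(1-1/q)}\|\varphi\|_{L^q}$ that you wrote down (the two agree only when $q=1$ or $\sigma=1$). So the gap you flagged as ``the crux'' is real, it is not closed in your proposal, and it is not closed by the paper's argument as written either; obtaining \eqref{eq:lp_bound} with a constant independent of $\sigma$ appears to require an additional assumption (e.g.\ a modulus of $L^q$-continuity for $f$) or a $\sigma$-dependent reformulation of the bound.
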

\begin{remark}\label{rem:gaussian}
    Returning to Example~\ref{ex:visible_density}, we find that $\|\varphi\|_{L^q}=q^{-\frac{d}{2q}}$ for the $d$-dimensional standard normal distribution \eqref{eq:gaussian} and 
    \begin{equation*}
        \|\varphi\|_{L^q}=\prod_{i=1}^d\frac{\lambda_i^{1-\frac{1}{q}}}{q^{\frac1q}\big(1-e^{-b_i\lambda_i}\big)}\left(1-e^{-q\lambda_ib_i}\right)^{\frac1q}
    \end{equation*}
    for the truncated exponential distribution \eqref{eq:exponential}. Our bound \eqref{eq:lp_bound} thus shows that deep belief networks with truncated exponential parental density (for suitable choice of the parameters $b$ and $\lambda$) better approximate the target density $f$. This is especially prevalent for small $q$, which is the primary case of interest, see Corollary~\ref{cor:krause} below.
\end{remark}

To state the approximation in the $L^\infty$-norm, we need to introduce the space of bounded and uniformly continuous functions:
\begin{equation*}
    \C_u(\R^d)=\left\{f\in L^\infty(\R^d):\,\lim_{\delta\downarrow 0}\sup_{|x-y|\leq\delta}\big|f(x)-f(y)\big|=0\right\}.
\end{equation*}
Notice that any probability density $f\in\mathscr{D}(\R^d)$, which is differentiable and has a bounded derivative, belongs to $\C_u(\R^d)$ since any uniformly continuous and  integrable function is bounded.
\begin{theorem}\label{thm:sup_norm}
    Let $f\in\mathscr{D}(\R^d)\cap\C_u(\R^d)$ and $\varphi\in\mathscr{D}_\infty(\R^d)$. Then, for any $\varepsilon>0$, there is an $M\in\N$ such that, for each $m\geq M$, we can find a $p\in\dbn_\varphi(d,m,m+1)$ satisfying
    \begin{equation*}
        \|f-p\|_{L^\infty}\leq\varepsilon.
    \end{equation*}
\end{theorem}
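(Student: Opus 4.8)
The plan is to interpose a mollified copy of $f$ between the target density and the DBN, and then to realise a finite, DBN-representable mixture close to this mollification. For a suitable scale $\sigma>0$, recall from \eqref{eq:dbn_equation} and \eqref{eq:visible_density} that the visible marginal of any $p\in\dbn_\varphi(d,m,m+1)$ has the form $p(v)=\sum_{h_1\in\{0,1\}^m}\varphi_{\mu(h_1),\sigma}(v)\,\rho(h_1)$, where the means $\mu(h_1)\in\R^d$ are freely choosable and $\rho(h_1)=\sum_{h_2}p(h_1,h_2)$ is the first-layer marginal, itself an approximate binary--binary RBM marginal. Thus $p$ is a \emph{finite mixture of translates of the parental density} with at most $m$ active components, and the whole problem reduces to approximating $f$ in $\norm{\cdot}_{L^\infty}$ by such a mixture. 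I would split the error by the triangle inequality into a mollification error $\norm{f-f*\varphi_\sigma}_{L^\infty}$, a discretisation error $\norm{f*\varphi_\sigma-p_\sigma}_{L^\infty}$ for a suitable finite mixture $p_\sigma$, and an RBM-realisation error $\norm{p_\sigma-p}_{L^\infty}$, and make each piece at most $\varepsilon/3$.

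For the mollification step I would write $(f*\varphi_\sigma)(x)-f(x)=\int_{\R^d}[f(x-\sigma z)-f(x)]\varphi(z)\,dz$ using $\int\varphi=1$, and split the integral over $\{|z|\le R\}$ and its complement. On the first region the uniform continuity of $f$ gives $|f(x-\sigma z)-f(x)|\le\omega_f(\sigma R)$, with $\omega_f$ the modulus of continuity of $f$, which is $o(1)$ as $\sigma\downarrow0$; on the second region I bound the integrand by $2\norm{f}_{L^\infty}$ and use that $\int_{|z|>R}\varphi(z)\,dz$ is small for large $R$ since $\varphi\in L^1$. Choosing first $R$ and then $\sigma$ small makes $\norm{f-f*\varphi_\sigma}_{L^\infty}\le\varepsilon/3$; from now on $\sigma$ is fixed. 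This is the only place the hypothesis $f\in\C_u(\R^d)$ enters.

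For the discretisation step I would fix a cubic grid of mesh $\eta$, let $Q_1,\dots,Q_m$ enumerate the cells meeting a sufficiently large ball (refining $\eta$ and enlarging the ball as $m\to\infty$), put $w_j=\int_{Q_j}f$ and pick a representative $\mu_j\in Q_j$, so that $w_j\ge0$ and $\sum_j w_j\approx1$ after renormalisation. Setting $p_\sigma=\sum_{j}w_j\,\varphi_\sigma(\cdot-\mu_j)$ gives $f*\varphi_\sigma-p_\sigma=\sum_j\int_{Q_j}f(y)\,[\varphi_\sigma(\cdot-y)-\varphi_\sigma(\cdot-\mu_j)]\,dy$. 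Here the boundedness hypothesis $\varphi\in\mathscr{D}_\infty(\R^d)$ is essential: it gives $\norm{\varphi_\sigma}_{L^\infty}=\sigma^{-d}\norm{\varphi}_{L^\infty}<\infty$ and, together with continuity of translation in $L^1$, it lets me control the summed cell errors by a quantity of the form $\norm{f}_{L^\infty}\sup_{|t|\le\eta\sqrt d}\norm{\varphi_\sigma-\varphi_\sigma(\cdot-t)}_{L^1}$, which tends to $0$ as $\eta\downarrow0$, i.e.\ as $m\to\infty$. Finally I encode the $m$ weights on $m$ distinct binary states (say the basis vectors $e_1,\dots,e_m\in\{0,1\}^m$) via $\rho(e_j)=w_j$, so that $\bigl|\supp\rho\bigr|=m$; \Cref{prop:approx_leroux} then furnishes, for every $\varepsilon'>0$, a $\pi\in\brbm(m,m+1)$ whose first-layer marginal $\tilde\rho$ satisfies $\sup_{h_1}|\tilde\rho(h_1)-\rho(h_1)|\le\varepsilon'$. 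Using $\norm{\varphi_\sigma}_{L^\infty}<\infty$ once more yields $\norm{p_\sigma-p}_{L^\infty}\le 2^m\sigma^{-d}\norm{\varphi}_{L^\infty}\,\varepsilon'$, which is $\le\varepsilon/3$ once $\varepsilon'$ is chosen small (admissible because $m$ and $\sigma$ are already fixed before $\varepsilon'$).

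The main obstacle is the \emph{uniformity in $x$} of the discretisation error when $\varphi$ is merely bounded rather than (uniformly) continuous: the naive cellwise bound $\sum_j\int_{Q_j}|\varphi_\sigma(x-y)-\varphi_\sigma(x-\mu_j)|\,dy$ cannot be controlled pointwise by a single modulus of continuity of $\varphi_\sigma$. I expect to circumvent this by averaging the estimate over a random translate of the grid, for which the expected cellwise error equals exactly $\sup_{|t|\le\eta\sqrt d}\norm{\varphi_\sigma-\varphi_\sigma(\cdot-t)}_{L^1}$ by translation-invariance of Lebesgue measure, thereby producing a deterministic grid with the desired uniform bound. This is precisely where the $L^\infty$ argument is more delicate than the $L^q$ one of \Cref{thm:approximation_dbn}, in which $L^q$-continuity of translation already yields the estimate directly.
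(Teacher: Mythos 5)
Your three-step skeleton---mollify $f$ to $f\star\varphi_\sigma$, approximate the mollification by an $m$-component mixture of translates of $\varphi_\sigma$, then realise the mixing weights by an RBM via \cref{prop:approx_leroux}---is exactly the paper's (which uses \cref{prop:conv_rate}\ref{it:l_infty_approx}, \cref{lem:convolution_closure} and \cref{lem:dbn_lp_bound}), and your first and third steps are sound; the $2^m$ factor in your RBM-realisation bound is avoidable by setting $p(v\,|\,h_1)=0$ off the basis vectors, as in \cref{lem:dbn_lp_bound}, but this is cosmetic. The genuine gap is the middle step, and you have located it correctly yourself: for $\varphi$ that is merely bounded, the discretisation error cannot be made small uniformly in $x$. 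The repair you propose does not work. For the grid with offset $\tau$ and corner representatives, translation invariance of Lebesgue measure shows that the cellwise quantity $\sum_j\int_{Q_j}\big|\varphi_\sigma(x-y)-\varphi_\sigma(x-\mu_j)\big|\,dy$ equals $F(x-\tau)$ for a single function $F$ independent of $\tau$. Averaging over $\tau$ therefore controls only the \emph{mean} of $F$ over a fundamental cell (which is indeed $o(1)$ as $\eta\downarrow0$), whereas what you need is $\sup_x F(x-\tau)=\sup_w F(w)$, which is the same for every $\tau$. Randomising the grid relocates the bad points $x$; it does not remove them.

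Worse, the obstruction is not technical but real: under the stated hypothesis $\varphi\in\mathscr{D}_\infty(\R^d)$ alone, step 2 cannot be closed by any argument. Take $d=1$ and $\varphi=2\cdot\1_A$ with $A\subset[0,1]$ a fat Cantor set of measure $\tfrac12$, so $\varphi\in\mathscr{D}_\infty(\R)$. Every visible marginal of a DBN with this parental density is a finite convex combination $\sum_i\alpha_i\varphi_{\mu_i,\sigma}$, which vanishes off the compact nowhere dense set $\bigcup_i(\mu_i+\sigma A)$; the complement of that set is open and dense, hence meets any interval on which a continuous target $f$ is bounded below by $c>0$, giving $\|f-p\|_{L^\infty}\geq c$ for \emph{every} admissible $p$ and every $m$. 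Your Riemann-sum step does close if one additionally assumes $\varphi$ uniformly continuous (then $|\varphi_\sigma(x-y)-\varphi_\sigma(x-\mu_j)|\leq\omega_{\varphi_\sigma}(\eta\sqrt d)$ and $\int f=1$ give the uniform bound directly, with no averaging needed), so the honest conclusion is that your approach proves the theorem under a strengthened regularity hypothesis on $\varphi$. You should be aware that the paper's own route suffers from the same defect at $q=\infty$: the interchange $\rho\big(\int f(\mu)\varphi_{\mu,\sigma}\,d\mu\big)=\int f(\mu)\rho(\varphi_{\mu,\sigma})\,d\mu$ in \cref{lem:convolution_closure} presupposes Bochner integrability of $\mu\mapsto\varphi_{\mu,\sigma}$ in $L^\infty(\R^d)$, which fails for discontinuous $\varphi$ (distinct translates of $\1_{[0,1]}$ lie at mutual $L^\infty$-distance $\sigma^{-d}$, so the family is not separably valued), and the example above shows the conclusion of that lemma is genuinely false for such $\varphi$.
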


\begin{remark}
    The uniform continuity requirement on the target density in \cref{thm:sup_norm} can actually be relaxed to essential uniform continuity, that is, $f$ is uniformly continuous except on a set with zero Lebesgue measure. The most notable example of such a function is the uniform distribution $f=\1_{[0,1]}$.
\end{remark}

Another important metric between between probability densities $p,q:\R^d\to\R_+$ is the \emph{Kullback-Leibler divergence} (or \emph{relative entropy}) defined by
\begin{equation*}
    \mathrm{KL}(f\|g)=\int_{\R^d} f(x)\log\left(\frac{f(x)}{g(x)}\right)\,dx
\end{equation*}
if $\{x\in\R^d:\,g(x)=0\}\subset\{x\in\R^d:\,f(x)=0\}$ and $\mathrm{KL}(f\|g)=\infty$ otherwise. From \cref{thm:approximation_dbn,thm:sup_norm} we can deduce the following quantitative approximation bound in the Kullback-Leibler divergence:

\begin{corollary}\label{cor:krause}
   Let $\varphi\in\mathscr{D}_\infty(\R^d)$. Let $\Omega\subset\R^d$ be a compact set and $f:\Omega\to\R_+$ be a continuous probability density. Suppose that there is an $\eta>0$ such that both $f\geq\eta$ and $\varphi\geq\eta$ on $\Omega$. Then there is a constant $M>0$ such that, for each $m\in\N$, it holds that 
    \begin{equation}\label{eq:kl_bound}
        \inf_{p\in\dbn_\varphi(d,m,m+1)}\mathrm{KL}(f\|p)\leq\frac{M}{\eta m}\left(8\|\varphi\|_{L^2}^2+\|f-\varphi\|_{L^2(\Omega)}^2\right),
    \end{equation}
    where $\|f-\varphi\|_{L^2(\Omega)}^2=\int_\Omega \big|f(x)-\varphi(x)\big|^2\,dx$.
\end{corollary}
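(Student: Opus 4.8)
The plan is to reduce the Kullback--Leibler divergence to an $L^2$-estimate, for which the quantitative rate of \cref{thm:approximation_dbn} is available. The elementary inequality $\log t\le t-1$ gives, for any density $p$ that is strictly positive on $\Omega=\supp f$,
\[
\mathrm{KL}(f\|p)=\int_\Omega f\log\tfrac fp\,dx\le\int_\Omega f\Big(\tfrac fp-1\Big)\,dx=\int_{\R^d}\frac{(f-p)^2}{p}\,dx,
\]
where the last equality uses $\int f=\int p=1$ and $f\equiv 0$ on $\Omega^c$. I would then split this $\chi^2$-type quantity as $\int_\Omega\frac{(f-p)^2}{p}\,dx+\int_{\Omega^c}p\,dx$, the second piece arising because $f$ vanishes off $\Omega$.

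The first term is handled once $p$ is bounded below on $\Omega$: if $p\ge\eta/2$ there, then $\int_\Omega\frac{(f-p)^2}{p}\le\frac2\eta\|f-p\|_{L^2(\Omega)}^2\le\frac2\eta\|f-p\|_{L^2}^2$. Applying \cref{thm:approximation_dbn} with $q=2$ (so that $\Upsilon_2=1$) furnishes a $p\in\dbn_\varphi(d,m,m+1)$ with $\|f-p\|_{L^2}^2\le 4\|\varphi\|_{L^2}^2/m$, which produces precisely the term $8\|\varphi\|_{L^2}^2/(\eta m)$ in \eqref{eq:kl_bound}.

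The crux is to secure the pointwise lower bound $p\ge\eta/2$ on $\Omega$ for the same approximant and to control the leakage $\int_{\Omega^c}p$ at rate $1/m$. Since $f$ is continuous with $f\ge\eta$ on $\Omega$, the mixture $p=\sum_i w_i\varphi_{\mu_i,\sigma}$ underlying \cref{thm:sup_norm} behaves like a mollification of $f$ and is uniformly close to $f$ on the interior of $\Omega$, which for $m$ large forces $p\ge\eta/2$ there. The delicate point is a boundary layer of width $\sim\sigma$, where a kernel centred inside $\Omega$ loses the mass that sticks out and $p$ may dip. I would circumvent this by first extending $f$ to a compactly supported continuous $\tilde f$ with $\tilde f\ge\eta/2$ on a neighbourhood of $\Omega$ and mollifying $\tilde f$ instead; this keeps $p\ge\eta/2$ on all of $\Omega$ while leaving $\|f-p\|_{L^2(\Omega)}$ under control, and the residual mass $\int_{\Omega^c}p$ is then governed by the tail decay of $\varphi$ together with the comparison between $f$ and the parental profile $\varphi$, which is where the term $\|f-\varphi\|_{L^2(\Omega)}^2$ enters the bookkeeping.

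The main obstacle I anticipate is reconciling the two requirements on a single DBN: \cref{thm:approximation_dbn} supplies the quantitative $1/m$ rate in $L^2$ but carries no pointwise information, whereas the lower bound $p\ge\eta/2$ is a sup-norm statement provided only qualitatively by \cref{thm:sup_norm}. Ensuring that both hold for one approximant, and quantifying the boundary and tail contributions so that the leakage term is again $O(1/m)$, is where the real effort lies; the factor $1/\eta$ and the absolute constant $M$ in \eqref{eq:kl_bound} absorb the resulting over-estimates.
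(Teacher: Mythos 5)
Your reduction of $\mathrm{KL}(f\|p)$ to a $\chi^2$-type quantity via $\log t\le t-1$, combined with the $q=2$ case of \cref{thm:approximation_dbn} and a pointwise lower bound $p\ge\eta/2$ on $\Omega$ obtained from sup-norm closeness, is exactly the paper's strategy (Lemma~\ref{lem:bounds_kl} together with \cref{thm:sup_norm}). But the proposal stops short of a proof in two places. First, you explicitly leave the ``main obstacle'' unresolved: producing one approximant that simultaneously realizes the $L^2$ rate and the lower bound, and controlling the leakage $\int_{\Omega^c}p$. Note that the only control on the leakage available from your ingredients is $\int_{\Omega^c}p=\int_\Omega(f-p)\le|\Omega|^{1/2}\|f-p\|_{L^2(\Omega)}=\pazocal{O}(m^{-1/2})$, which is too slow for the claimed $1/m$ rate, so the decomposition as written cannot close; the boundary-layer extension of $f$ you sketch is likewise never quantified.

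Second, and more importantly, you misread the role of the constants in \eqref{eq:kl_bound}. In the paper, $M$ is not an absolute constant absorbing over-estimates: it is the finite threshold supplied by \cref{thm:sup_norm} beyond which some $p_m$ satisfies $\|f-p_m\|_{L^\infty}\le\eta/2$, hence $p_m\ge\eta/2$ on $\Omega$, yielding the bound $8\|\varphi\|_{L^2}^2/(\eta m)$ for all $m\ge M$. For the finitely many $m<M$ --- where no sup-norm control is available and your argument gives nothing --- the paper takes the degenerate DBN with zero weights and biases whose visible marginal is $\varphi$ itself, so that $\mathrm{KL}(f\|\varphi)\le\eta^{-1}\|f-\varphi\|_{L^2(\Omega)}^2$ by Lemma~\ref{lem:bounds_kl}; this is where the hypothesis $\varphi\ge\eta$ on $\Omega$ and the term $\|f-\varphi\|_{L^2(\Omega)}^2$ enter, not through tail bookkeeping. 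The prefactor $M/m\ge1$ for $m<M$ then merges the two regimes into the single displayed bound. Without this two-case argument you cannot obtain an estimate valid for every $m\in\N$.
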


Let us note that any $\varphi\in\mathscr{D}_\infty(\R^d)$ is square-integrable so that the right-hand side of the bound \eqref{eq:kl_bound} is actually finite. This follows from the \emph{interpolation inequality}
\begin{equation}\label{eq:lp_interpolation}
    \|\varphi\|_{L^2}\leq \sqrt{\|\varphi\|_{L^1}\|\varphi\|_{L^\infty}}=\sqrt{\|\varphi\|_{L^\infty}},
\end{equation}
see \cite[Exercise 4.4]{Brezis2011}.

Corollary~\ref{cor:krause} considerably generalizes the results of \cite[Theorem 7]{Krause2013}: There, the authors only prove that deep belief networks can approximate any density in the closure of the convex hull of a set of probability densities satisfying certain conditions, which appear to be difficult to check in practice. That work also does not contain a convergence rate. In comparison, our results directly describe the class of admissible target densities and do not rely on the indirect description through the convex hull. Finally, there is an unjustified step in the argument of Krause et al., which appears hard to reconcile, see Remark~\ref{rem:krause} below for details.

\section{Proofs}

This section presents the proofs of Theorems~\ref{thm:approximation_dbn}, \ref{thm:sup_norm} and Corollary~\ref{cor:krause}. As a first step, we shall establish a couple of preliminary results in the next two subsections.

\subsection{$L^q$-Approximation of Finite Mixtures}

Given a set $A\subset L^q(\R^d)$, the \emph{convex hull} of $A$ is by definition the smallest convex set containing $A$; in symbols $\cv(A)$. It can be shown that 
\begin{equation*}
    \cv(A)=\left\{\sum_{i=1}^n\alpha_i a_i:\,\alpha=(\alpha_1,\dots,\alpha_n)\in\triangle_n,a_1,\dots,a_n\in A, n\in\N\right\}
\end{equation*}
with $\triangle_n=\big\{x\in[0,1]^n:\,\sum_{i=1}^nx_i=1\big\}$, the $n$-dimensional standard simplex.
It is also convenient to introduce the \emph{truncated convex hull}
\begin{equation*}
    \cv_m(A)=\left\{\sum_{i=1}^m\alpha_i a_i:\,\alpha=(\alpha_1,\dots,\alpha_m)\in\triangle_m,a_1,\dots,a_m\in A\right\}    
\end{equation*}
for $m\in\N$ so that $\cv(A)=\bigcup_{m\in\N}\cv_m(A)$. The \emph{closed convex hull} $\overline{\cv}(A)$ is the smallest closed convex set containing $A$ and it is straight-forward to check that it coincides with the closure of $\cv(A)$ in the topology of $L^q(\R^d)$.

The next result shows that we can approximate any probability density in the truncated convex hull of the set \eqref{eq:visible_density} arbitrarily well by a DBN with a fixed number of hidden units:

\begin{lemma}\label{lem:dbn_lp_bound}
  Let $q\in[1,\infty]$, $\varphi\in\mathscr{D}_q(\R^d)$, $\sigma>0$, and $m\in\N$. Then, for every $f\in\cv_m(\mathscr{V}_\varphi^{\sigma})$ and every $\varepsilon>0$, there is a deep belief network $p\in\dbn_\varphi(d,m,m+1)$ such that
  \begin{equation*}
    \|f-p\|_{L^q}\leq \varepsilon.
  \end{equation*}
\end{lemma}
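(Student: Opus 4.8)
The plan is to realize the finite mixture $f=\sum_{i=1}^m\alpha_i\varphi_{\mu_i,\sigma}$, with $\alpha\in\triangle_m$, as (approximately) the visible marginal of a suitably constructed DBN, encoding the $m$ mixture components into the first hidden layer via a one-hot scheme. Concretely, I would identify the $i$-th component with the standard basis vector $e_i\in\{0,1\}^m$ and set the conditional density $p(\cdot\,|\,e_i)=\varphi_{\mu_i,\sigma}\in\mathscr{V}_\varphi^\sigma$; for the remaining states $h_1\in\{0,1\}^m\setminus\{e_1,\dots,e_m\}$ I let $p(\cdot\,|\,h_1)$ be an arbitrary fixed element of $\mathscr{V}_\varphi^\sigma$. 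Writing out the factorization \eqref{eq:dbn_equation}, the visible marginal becomes $p(v)=\sum_{h_1}p(v\,|\,h_1)\pi(h_1)$, where $\pi(h_1)=\sum_{h_2}p(h_1,h_2)$ is the marginal over the first hidden layer of the binary-binary RBM $p(h_1,h_2)$. The task then reduces to choosing this RBM so that $\pi$ closely matches the discrete target distribution $\mu$ on $\{0,1\}^m$ defined by $\mu(e_i)=\alpha_i$ and $\mu(h_1)=0$ otherwise.

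This is exactly where \cref{prop:approx_leroux} enters. Since the one-hot encoding guarantees $|\supp(\mu)|\le m$, we have $n:=|\supp(\mu)|+1\le m+1$, so the proposition furnishes, for any $\delta>0$, an RBM $\pi\in\brbm(m,n)$ with $|\mu(h_1)-\pi(h_1)|\le\delta$ for every $h_1$. A hidden unit carrying zero weights and zero bias is independent and uniform, and marginalizing it out leaves the visible marginal unchanged; hence $\brbm(m,n)$ embeds into $\brbm(m,m+1)$ and we may take $n=m+1$ to match the architecture of $\dbn_\varphi(d,m,m+1)$.

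For the error estimate I would subtract the two mixtures. Since $\mu$ is supported on the $e_i$ with $\mu(e_i)=\alpha_i$ and $p(\cdot\,|\,e_i)=\varphi_{\mu_i,\sigma}$, one has $f(v)=\sum_{h_1}\mu(h_1)p(v\,|\,h_1)$, so that
\begin{equation*}
  f(v)-p(v)=\sum_{h_1\in\{0,1\}^m}\big(\mu(h_1)-\pi(h_1)\big)\,p(v\,|\,h_1).
\end{equation*}
Applying the triangle inequality in $L^q$ and using that the $L^q$-norm is translation invariant and scales homogeneously---so that $\|p(\cdot\,|\,h_1)\|_{L^q}=\sigma^{-d(q-1)/q}\|\varphi\|_{L^q}=:C$ is the same finite constant for every $h_1$, with the obvious modification $C=\sigma^{-d}\|\varphi\|_{L^\infty}$ when $q=\infty$---one arrives at
\begin{equation*}
  \|f-p\|_{L^q}\le C\sum_{h_1\in\{0,1\}^m}\big|\mu(h_1)-\pi(h_1)\big|\le C\,2^m\delta.
\end{equation*}
As $m$ is fixed, choosing $\delta=\varepsilon/(C2^m)$ closes the argument, and the finiteness of $C$ is where the hypothesis $\varphi\in\mathscr{D}_q(\R^d)$ is used.

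The only genuinely delicate point is the bookkeeping that couples the discrete approximation to the architecture: one must check that the one-hot support has size at most $m$, so that \cref{prop:approx_leroux} returns at most $m+1$ hidden units, and that the mass $\pi$ places on the ``junk'' states $h_1\notin\{e_1,\dots,e_m\}$---where $\mu$ vanishes---is controlled, which it is since each such $\pi(h_1)\le\delta$. Everything else reduces to the translation invariance of the $L^q$-norm and the crude but sufficient factor $2^m$, which is harmless because $m$ is held fixed in this lemma (in contrast to \cref{thm:approximation_dbn}, where the explicit rate in $m$ must be obtained by a different, quantitative mechanism rather than through this exponential constant).
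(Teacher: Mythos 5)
Your proposal is correct and follows essentially the same route as the paper: encode the $m$ mixture components one-hot into the first hidden layer, invoke \cref{prop:approx_leroux} to approximate the induced distribution $\tilde\alpha$ on $\{0,1\}^m$ by a $\brbm(m,m+1)$, and conclude by the triangle inequality together with the translation invariance of the $L^q$-norm (your crude $2^m$ factor is harmless since $m$ is fixed, just as the paper's factor $m$ is). If anything, your write-up is slightly more careful than the paper's on two points --- it assigns a genuine element of $\mathscr{V}_\varphi^\sigma$ to the ``junk'' states $h_1\notin\{e_1,\dots,e_m\}$ rather than the zero function, and it explicitly pads the second hidden layer to size $m+1$ when $|\supp(\tilde\alpha)|<m$ --- and your scaling $\|\varphi_{\mu,\sigma}\|_{L^q}=\sigma^{-d(q-1)/q}\|\varphi\|_{L^q}$ is the correct one.
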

\begin{proof}
    Since $f\in\cv_m(\mathscr{V}_\varphi^{\sigma})$, there are by definition of $\triangle_m$ $(\alpha_1,\dots,\alpha_m)\in\triangle_m$ and $(\mu_1,\dots,\mu_m)\in\big(\R^d\big)^m$ such that
    \begin{equation*}
        f=\sum_{i=1}^m \alpha_i\varphi_{\mu_i,\sigma}. 
    \end{equation*}
    We can think of $\alpha=(\alpha_1,\dots,\alpha_m)$ as a probability distribution $\tilde{\alpha}$ on $\{0,1\}^m$ by declaring
    \begin{equation*}
      \tilde{\alpha}(h_1)=\begin{cases}
      \alpha_i, & \text{if }h_1=e_i,\\
      0, & \text{else},
      \end{cases}
      \quad\qquad h_1\in\{0,1\}^m,
    \end{equation*}
    where $(e_i)_j=\delta_{i,j}$, $j=1,\dots,m$, is the $i^{\textup{th}}$ unit vector. 

    Let us fix $q\in[1,\infty]$ and $\sigma>0$. By Proposition~\ref{prop:approx_leroux}  there is a $\pi\in\brbm(m,m+1)$ such that
    \begin{equation}\label{eq:rbm_approximation}
        \left|\tilde\alpha(h_1)-\sum_{h_2\in\{0,1\}^{m+1}}\pi(h_1,h_2)\right|\leq\frac{\varepsilon}{m\sigma\|\varphi\|_{L^q}}\qquad\forall\,h_1\in\{0,1\}^m.
    \end{equation}
    We set 
    \begin{equation*}
        p(v\,|\,h_1)=\begin{cases}
            \varphi_{\mu_i,\sigma}(v), & h_1=e_i, \\
            0, & \text{else},
        \end{cases}
    \end{equation*}
    and
    \begin{equation*}
        p(v,h_1,h_2)=p(v\,|\,h_1)\pi(h_1,h_2)\in\dbn_\varphi(d,m,m+1).
    \end{equation*}
    This is the desired approximation since
    \begin{align*}
    \|f-p\|_{L^q}&\leq\sum_{i=1}^m\left|\alpha_i-\sum_{h_2\in\{0,1\}^{m+1}}\pi(e_i,h_2)\right|\big\|\varphi_{\mu_i,\sigma}\big\|_{L^q}\leq\varepsilon,
    \end{align*}
    where we used that $\|\varphi_{\mu,\sigma}\|_{L^q}=\sigma\|\varphi\|_{L^q}$ for each $\mu\in\R^d$ and each $\sigma>0$.
\end{proof}

\subsection{Approximation by Convolution}

Let $f\in L^q(\R^d)$, $q\in[1,\infty]$, and $\varphi\in\mathscr{D}(\R^d)$. We denote the \emph{convolution} of $f$ and $\varphi_\sigma(\cdot)=\sigma^{-d}\varphi\big(\sigma^{-1}\,\cdot\big)$ by
\begin{equation*}
    \big(f\star\varphi_\sigma\big)(x)=\int_{\R^d}f(\mu)\varphi_{\sigma}(x-\mu)\,d\mu=\int_{\R^d}f(\mu)\varphi_{\mu,\sigma}(x)\,d\mu.
\end{equation*}
Young's convolution inequality, \cite{Young1912}, implies $f\star\varphi_\sigma\in L^q(\R^d)$:
\begin{equation*}
    \big\|f\star\varphi_\sigma\big\|_{L^q}\leq\|f\|_{L^q}\|\varphi_\sigma\|_{L^1}=\|f\|_{L^q}.
\end{equation*}
In addition, the following approximation result holds:

\begin{proposition}\label{prop:conv_rate}
    Let $\varphi\in\mathscr{D}(\R^d)$. Then all of the following hold true:
    \begin{enumerate}
         \item\label{it:lp_approx} For each $q\in[1,\infty)$ and each $f\in L^q(\R^d)$, we have
        \begin{equation*}
            \lim_{\sigma\downarrow 0}\big\|f- f\star\varphi_\sigma\big\|_{L^q}=0.
        \end{equation*}
        \item\label{it:l_infty_approx} If $f\in L^\infty(\R^d)\cap\C_u(\R^d)$, then
        \begin{equation*}
            \lim_{\sigma\downarrow 0}\big\|f- f\star\varphi_\sigma\big\|_{L^\infty}=0.
        \end{equation*}
     \end{enumerate} 
\end{proposition}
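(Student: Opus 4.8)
The two statements are classical approximate-identity results, and the plan is to reduce both to the \emph{continuity of translation} together with Minkowski's integral inequality and dominated convergence. Throughout I write $\tau_h f(\cdot)=f(\cdot-h)$ for the translation operator. The first step, common to both parts, is to rewrite the convolution via the substitutions $z=x-\mu$ and then $z=\sigma y$, using that $\varphi$ is a probability density so that $\int_{\R^d}\varphi(y)\,dy=1$. This yields the pointwise identity
\begin{equation*}
    f(x)-\big(f\star\varphi_\sigma\big)(x)=\int_{\R^d}\big(f(x)-f(x-\sigma y)\big)\varphi(y)\,dy,
\end{equation*}
which exhibits $f\star\varphi_\sigma$ as an average of translates of $f$ against the $L^1$-weight $\varphi$, whose mass concentrates near the origin as $\sigma\downarrow0$.

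For part \ref{it:lp_approx}, I would apply Minkowski's integral inequality to pull the $L^q$-norm inside the $y$-integral, obtaining
\begin{equation*}
    \big\|f-f\star\varphi_\sigma\big\|_{L^q}\leq\int_{\R^d}\big\|\tau_{\sigma y}f-f\big\|_{L^q}\,\varphi(y)\,dy.
\end{equation*}
The integrand is dominated by $2\|f\|_{L^q}\varphi(y)\in L^1(\R^d)$, and for each fixed $y$ it tends to $0$ as $\sigma\downarrow0$ by the continuity of translation in $L^q$; dominated convergence then finishes the proof. The continuity of translation, $\lim_{h\to0}\|\tau_h f-f\|_{L^q}=0$, is the one genuinely nontrivial input. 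I would establish it in the standard way: approximate $f$ in $L^q$ by a compactly supported continuous function (density of $C_c(\R^d)$ in $L^q(\R^d)$ for $q<\infty$), for which the statement is immediate from uniform continuity and compact support, and then close the gap with a three-$\varepsilon$ argument. This is where the restriction $q<\infty$ genuinely enters.

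For part \ref{it:l_infty_approx}, the same starting identity and the triangle inequality give, after taking the supremum over $x$,
\begin{equation*}
    \big\|f-f\star\varphi_\sigma\big\|_{L^\infty}\leq\int_{\R^d}\Big(\sup_{x\in\R^d}\big|f(x)-f(x-\sigma y)\big|\Big)\varphi(y)\,dy.
\end{equation*}
For fixed $y$ the bracketed supremum is bounded by $2\|f\|_{L^\infty}$ and tends to $0$ as $\sigma\downarrow0$, this time directly from the definition of $\C_u(\R^d)$, so no density argument is needed; dominated convergence again yields the limit. Equivalently, one may split the $y$-integral into $\{|y|\leq R\}$ and $\{|y|>R\}$, bounding the first piece by the modulus of continuity evaluated at $\sigma R$ and the second by $2\|f\|_{L^\infty}\int_{|y|>R}\varphi$, and then send $R\to\infty$ and $\sigma\downarrow0$ in that order. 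The main obstacle across both parts is thus confined to the translation-continuity lemma underlying part \ref{it:lp_approx}; everything else is a routine application of Minkowski's inequality and dominated convergence.
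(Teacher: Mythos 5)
Your proof is correct and follows essentially the same route as the paper: for part \ref{it:l_infty_approx} the paper uses exactly the split of the $y$-integral into a small ball (controlled by the modulus of uniform continuity) and its complement (controlled by $2\|f\|_{L^\infty}$ times the tail mass of $\varphi$), which is the alternative you sketch at the end. For part \ref{it:lp_approx} the paper simply cites a textbook reference, whereas you supply the standard argument via Minkowski's integral inequality and continuity of translation in $L^q$; that argument is sound, including your correct observation that the density of $C_c(\R^d)$ in $L^q(\R^d)$ is where $q<\infty$ enters.
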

\begin{proof}
    Item \ref{it:lp_approx} is well known, see e.g. \cite[Theorem 8.14]{Folland1999}. For \ref{it:l_infty_approx} fix $\varepsilon>0$. By uniform continuity of $f$, we can find a $\delta>0$ such that 
    \begin{equation*}
        \sup_{|\mu|\leq\delta}\big|f(x)-f(x-\mu)\big|\leq\frac{\varepsilon}{2}\qquad\forall\,x\in\R^d.
    \end{equation*}
    In particular, 
    \begin{equation*}
        \Big|f(x)-\big(f\star\varphi_\sigma\big)(x)\Big|\leq\int_{\R^d}\varphi_\sigma(\mu)\big|f(x)-f(x-\mu)\big|\,d\mu\leq 2\|f\|_{L^\infty}\int_{\{|\mu|>\delta\}}\varphi_{\sigma}(\mu)\,d\mu+\frac{\varepsilon}{2}.
    \end{equation*}
    Since 
    \begin{equation*}
        \int_{\{|\mu|>\delta\}}\varphi_{\sigma}(\mu)\,d\mu=\int_{\big\{|\mu|>\frac{\delta}{\sigma}\big\}}\varphi(\mu)\,d\mu\to 0\qquad\textup{as }\sigma\downarrow 0,
    \end{equation*}
    we can choose $\sigma_0>0$ such that $\big\|f-(f\star\varphi_\sigma)\big\|_{L^\infty}\leq\varepsilon$ for all $\sigma\in(0,\sigma_0)$. This completes the proof.
\end{proof}

\subsection{Approximation Theory in Banach Spaces}

The second ingredient needed in the proof of \cref{thm:approximation_dbn} is an abstract result from the geometric theory of Banach spaces. To formulate it, we need to introduce the following notion: The Rademacher type of a Banach space $\big(\cX,\|\cdot\|_{\cX}\big)$ the largest number $\t\geq 1$ for which there is a constant $C>0$ such that, for each $k\in\N$ and each $f_1,\dots, f_k\in\cX$,
\begin{equation*}
   \Expec{\bigg\|\sum_{i=1}^{k}\epsilon_i f_i\bigg\|^{\t}_{\cX}}\leq C\sum_{i=1}^k \|f_i\|_{\cX}^{\t}
\end{equation*}
holds, where $\epsilon_1,\dots,\epsilon_k$ are i.i.d. Rademacher random variables, that is, $\prob\big(\epsilon_1=\pm 1\big)=\frac12$. It can be shown that $\t\leq 2$ for every Banach space. 

\begin{example}\label{ex:rademacher_type}
    The space $L^q(\R^d)$ has Rademacher type $\t=\min(q, 2)$ for $q\in[1,\infty)$. The space $L^\infty(\R^d)$ on the other hand has only trivial type $\t=1$.
\end{example}
A good reference for the above results on the Rademacher type is \cite[Section 9.2]{Ledoux1991}. The next approximation result and its application to $L^q(\R^d)$ will become important in the sequel:

\begin{proposition}[{\cite{Donahue1997}, Theorem 2.5}]\label{prop:abstract_convex}
    Let $\big(\cX,\|\cdot\|_{\cX}\big)$ be a Banach space of Rade\-macher type $\t\in[1,2]$. Let $A\subset \cX$ and $f\in\overline{\cv}(A)$. Suppose that $\xi=\sup_{g\in A}\|f-g\|_{\cX}<\infty$.  Then there is a constant $C>0$ only depending on the Banach space $\big(\cX,\|\cdot\|_{\cX}\big)$ such that, for each $m\in\N$, we can find an element $g\in\cv_m(A)$ satisfying
    \begin{equation}\label{eq:convex_bound_abstract}
        \|f-g\|_{\cX}\leq\frac{C\xi}{m^{1-\frac{1}{\t}}}.
    \end{equation}
\end{proposition}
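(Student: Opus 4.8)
The plan is to prove this by \emph{Maurey's empirical method}: represent $f$ (up to an arbitrarily small error) as the mean of a random element of $A$, and then show that the empirical average of $m$ independent samples concentrates around that mean at a rate dictated by the Rademacher type $\t$. As a first reduction, since $f\in\overline{\cv}(A)$ and $\overline{\cv}(A)$ is the closure of $\cv(A)$ in $\cX$, for any $\delta>0$ I can choose $N\in\N$, weights $(\lambda_1,\dots,\lambda_N)\in\triangle_N$, and points $g_1,\dots,g_N\in A$ so that $\bar f:=\sum_{i=1}^N\lambda_i g_i$ satisfies $\norm{f-\bar f}\leq\delta$. This replaces the abstract membership in the closed convex hull by a concrete finite convex combination.

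Next I introduce a random element $X$ taking the value $g_i$ with probability $\lambda_i$, so that $\expec{X}=\bar f$, draw i.i.d. copies $X_1,\dots,X_m$, and set $S_m=\frac1m\sum_{j=1}^m X_j$. As an equally weighted average of $m$ elements of $A$ (with possible repetitions), $S_m\in\cv_m(A)$ almost surely, and the whole problem reduces to controlling $\expec{\norm{S_m-\bar f}^{\t}}$. Letting $(X_j')$ be an independent copy of $(X_j)$, I write $\bar f=\expec{X_j'}$ and pull the primed expectation out of the norm by Jensen's inequality (valid because $\norm{\cdot}^{\t}$ is convex for $\t\geq1$), giving $\expec{\norm{S_m-\bar f}^{\t}}\leq\expec{\norm{\frac1m\sum_j(X_j-X_j')}^{\t}}$. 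The increments $X_j-X_j'$ are independent and symmetric, so inserting i.i.d. Rademacher signs $\epsilon_j$ does not change the distribution; conditioning on $(X_j,X_j')$ and applying the Rademacher-type-$\t$ inequality with its constant $C_0=C_0(\cX)$ yields
\[
  \Expec{\norm{\sum_{j=1}^m\epsilon_j(X_j-X_j')}^{\t}}\leq C_0\sum_{j=1}^m\norm{X_j-X_j'}^{\t}.
\]

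Since $X_1,X_1'\in A$, the triangle inequality gives $\norm{X_1-X_1'}\leq2\xi$, and combining the two previous bounds produces $\expec{\norm{S_m-\bar f}^{\t}}\leq C_0(2\xi)^{\t}m^{1-\t}$. Consequently some realization of $S_m$ lies within $2C_0^{1/\t}\xi\,m^{-(1-1/\t)}$ of $\bar f$; together with the first reduction this furnishes an element $g\in\cv_m(A)$ with $\norm{f-g}\leq\delta+2C_0^{1/\t}\xi\,m^{-(1-1/\t)}$. Choosing $\delta=\xi\,m^{-(1-1/\t)}$ upgrades this to the stated bound with $C=1+2C_0^{1/\t}$, a constant depending only on $\cX$.

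The delicate point is the symmetrization step: it must be executed by conditioning, so that the type inequality---a statement about fixed vectors weighted by independent signs---can legitimately be applied to the random vectors $X_j-X_j'$ before the outer expectation is reinstated. Equally essential is that the exponent $1-\frac1\t$ arises exactly from balancing the factor $m^{-\t}$ coming from the $\frac1m$ normalization against the $m$ surviving summands on the right-hand side of the type inequality; this is precisely where the geometry of $\cX$, encoded in $\t$, enters. A final minor subtlety is that $\cv_m(A)$ need not be closed, which is why the free parameter $\delta$ is optimized in the last step to deliver a genuine element of $\cv_m(A)$ rather than merely an infimum.
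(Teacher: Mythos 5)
The paper does not prove this proposition---it imports it verbatim from Donahue et al.\ (Theorem 2.5 of that reference)---so there is no internal argument to compare against. Your proof via Maurey's empirical method is correct and is essentially the standard (and the cited source's) route: the reduction of $f\in\overline{\cv}(A)$ to a finite convex combination $\bar f$, the sampling construction $S_m\in\cv_m(A)$, the Jensen--symmetrization--type chain with the conditioning handled properly, the $2\xi$ bound on $\|X_j-X_j'\|$ through $f$, and the final optimization over $\delta$ (needed because $\cv_m(A)$ is not closed) are all sound and yield a constant depending only on $\cX$, as the statement requires.
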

Notice that the bound \eqref{eq:convex_bound_abstract} is of course useless for $\t=1$. In addition, it can be shown that the convergence rate $m^{\frac{1}{\t}-1}$ is sharp, see \cite[Section 2.3]{Donahue1997}.

\begin{corollary}\label{cor:conv_lp_approx}
    Let $A\subset L^q(\R^d)$, $1\leq q<\infty$, and suppose that $f\in\overline{\cv}(A)$. If $\xi=\sup_{g\in A}\|f-g\|_{\cX}<\infty$, then for all $m\in\N$, there is a $g\in\cv_m(A)$ such that
    \begin{equation*}
        \|f-h\|_{L^q}\leq\frac{\Upsilon_q\xi}{m^{1-\frac{1}{\min(q,2)}}},
    \end{equation*}
    where $\Upsilon_q$ is the constant defined in \eqref{eq:cq}.
\end{corollary}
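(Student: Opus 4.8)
The plan is to specialize the abstract convex-approximation bound of Proposition~\ref{prop:abstract_convex} to the concrete Banach space $\cX=L^q(\R^d)$. By Example~\ref{ex:rademacher_type} this space has Rademacher type $\t=\min(q,2)$ for $q\in[1,\infty)$, so Proposition~\ref{prop:abstract_convex} applies verbatim with this value of $\t$ and produces, for every $m\in\N$, an element $g\in\cv_m(A)$ with $\|f-g\|_{L^q}\leq C\xi\, m^{-(1-1/\min(q,2))}$. This already delivers the claimed rate, so the only remaining task is to verify that the abstract constant $C$, which Donahue's theorem merely allows to depend on the space, can be taken equal to $\Upsilon_q$. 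The case $q=1$ (hence $\t=1$) is trivial: here $\Upsilon_1=1$ and $m^{-(1-1/1)}=1$, so it suffices to pick any single $a\in A\subseteq\cv_m(A)$, for which $\|f-a\|_{L^q}\leq\xi$ by definition of $\xi$.

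The substantive step is therefore the explicit computation of the type constant of $L^q$ for $1<q<\infty$, since the constant $C$ in Proposition~\ref{prop:abstract_convex} is governed exactly by this quantity (normalized so that $C=1$ in the Hilbert-space case $q=2$, recovering the classical Maurey--Jones--Barron bound). I would compute it by combining Fubini's theorem with a sharp pointwise Khintchine inequality. For $1<q\leq2$ one has $\t=q$, and the pointwise estimate $\mathbb{E}\big|\sum_i\epsilon_ia_i\big|^q\leq\big(\sum_ia_i^2\big)^{q/2}\leq\sum_i|a_i|^q$ --- the first inequality by concavity of $t\mapsto t^{q/2}$, the second by subadditivity of $t\mapsto t^{q/2}$ for the exponent $q/2\leq1$ --- integrates to $\mathbb{E}\big\|\sum_i\epsilon_if_i\big\|_{L^q}^q\leq\sum_i\|f_i\|_{L^q}^q$, i.e.\ a type constant of $1=\Upsilon_q$. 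For $q>2$ one has $\t=2$, and I would invoke the sharp Khintchine inequality $\mathbb{E}\big|\sum_i\epsilon_ia_i\big|^q\leq\big(\mathbb{E}|Z|^q\big)\big(\sum_ia_i^2\big)^{q/2}$ with $Z$ a standard Gaussian, whose optimal constant is the Gaussian absolute moment, followed by Minkowski's inequality in $L^{q/2}$ to pull the sum outside the norm; this yields the type-$2$ constant $\big(\mathbb{E}|Z|^q\big)^{1/q}=\Upsilon_q$ as recorded in \eqref{eq:cq}.

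The main obstacle is pinning down the \emph{sharp} constant $\Upsilon_q$ rather than a harmless but larger multiple. Two points require care. First, the optimality of the Khintchine constant for $q>2$ --- that the best constant is precisely the Gaussian moment $(\mathbb{E}|Z|^q)^{1/q}$ --- is a nontrivial classical fact that must be invoked correctly. Second, and more delicately, Donahue's probabilistic selection argument produces $g$ as an empirical average $\frac1m\sum_{j=1}^mY_j$ of i.i.d.\ draws $Y_j\in A$ with mean (approximately) $f$, and the passage from the Rademacher-type inequality to a bound on $\mathbb{E}\big\|\frac1m\sum_j(Y_j-f)\big\|_{L^q}$ must be organized so that no spurious symmetrization factor of $2$ survives; that the final constant is exactly $\Upsilon_q$, and equals $1$ when $q=2$, is the sharp outcome already encoded in Proposition~\ref{prop:abstract_convex}, so the actual work is simply to transport it through the type-constant identification above.
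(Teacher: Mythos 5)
Your proposal is correct and follows essentially the same route as the paper: apply Proposition~\ref{prop:abstract_convex} with the Rademacher type $\t=\min(q,2)$ of $L^q(\R^d)$ from Example~\ref{ex:rademacher_type}, and identify the sharp constant $C=\Upsilon_q$ via the optimal Khintchine constant of Haagerup. The paper simply cites \cite{Haagerup1981} for this last step, whereas you spell out the Fubini--Khintchine--Minkowski computation of the type constant; the substance is the same.
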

\begin{proof}
    Owing to \cref{ex:rademacher_type} we are in the regime of Proposition~\ref{prop:abstract_convex}. The sharp constant $C=\Upsilon_q$ was derived in \cite{Haagerup1981}.
\end{proof}

\subsection{Proof of \cref{thm:approximation_dbn,thm:sup_norm}}\label{sec:proof_main_result}

Before giving the technical details of the proofs, let us provide an overview of the strategy:
\begin{enumerate}
    \item By Proposition~\ref{prop:conv_rate} we can approximate the density $f\in\mathscr{D}_q(\R^d)$ with $f\star\varphi_\sigma$ up to an error which vanishes as $\sigma\downarrow 0$. 
    \item Upon showing that $f\star\varphi_\sigma\in\overline{\cv}(\mathscr{V}_\varphi^{\sigma})$, Proposition~\ref{cor:conv_lp_approx} allows us to show that for each $\varepsilon>0$ and each $m\in\N$, we can pick $\sigma>0$ such that
    \begin{equation*}
    \inf_{g\in\cv_m(\mathscr{V}_\varphi^{\sigma})}\|f-g\|_{L^q}\leq\varepsilon+\frac{2 \Upsilon_q\|\varphi\|_{L^q}}{m^{1-\frac{1}{\min(q,2)}}}.
    \end{equation*}
    \item Finally, we employ Lemma~\ref{lem:dbn_lp_bound} to conclude the desired estimate \eqref{eq:lp_bound}.
\end{enumerate}

\begin{lemma}\label{lem:convolution_closure}
    Let $q\in[1,\infty]$, $f\in\mathscr{D}_q(\R^d)$, and $\varphi\in\mathscr{D}(\R^d)$. Then, for each $\sigma>0$, we have
    \begin{equation*}
        f\star\varphi_\sigma\in\overline{\cv}(\mathscr{V}_\varphi^\sigma),
    \end{equation*}
    with the closure understood with respect to the norm $\|\cdot\|_{L^q}$.
\end{lemma}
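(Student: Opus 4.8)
The plan is to realize the convolution $f\star\varphi_\sigma$ as a norm-limit of finite convex combinations drawn from $\mathscr{V}_\varphi^\sigma$, which by definition places it in $\overline{\cv}(\mathscr{V}_\varphi^\sigma)$. The guiding observation is that, since $f\geq 0$ and $\int_{\R^d}f=1$, the measure $f(\mu)\,d\mu$ is a probability measure and
\[
  (f\star\varphi_\sigma)(x)=\int_{\R^d} f(\mu)\,\varphi_{\mu,\sigma}(x)\,d\mu
\]
exhibits $f\star\varphi_\sigma$ as a continuous barycenter of the family $\{\varphi_{\mu,\sigma}:\mu\in\R^d\}$. The natural route is therefore to discretise this integral by a Riemann-type sum, controlling the error with the continuity of translations in $L^q$ together with the tail mass of $f$. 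Throughout I use that $\varphi_\sigma\in L^q$, so that $\mathscr{V}_\varphi^\sigma\subset L^q$ and the norms below are finite.

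Concretely, I would fix a tolerance $\delta>0$ and proceed in three steps. First, use $\int f=1$ to pick $R>0$ with $\int_{|\mu|>R}f(\mu)\,d\mu<\delta$. Second, partition $\{|\mu|\leq R\}$ into finitely many Borel cells $Q_1,\dots,Q_N$ of diameter at most $\rho$, choose nodes $\mu_i\in Q_i$, and set $\alpha_i=\int_{Q_i}f(\mu)\,d\mu$. Since these weights sum to $1-\int_{|\mu|>R}f<1$, I would absorb the defect into a single extra atom by putting $\alpha_0=\int_{|\mu|>R}f$ on a node $\mu_0=0$; then $g:=\sum_{i=0}^N\alpha_i\varphi_{\mu_i,\sigma}$ is a genuine convex combination, i.e. $g\in\cv(\mathscr{V}_\varphi^\sigma)$. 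Third, estimate $\|f\star\varphi_\sigma-g\|_{L^q}$.

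For the error, I would split $f\star\varphi_\sigma-g$ into a bulk term $\sum_{i=1}^N\int_{Q_i}f(\mu)\bigl(\varphi_{\mu,\sigma}-\varphi_{\mu_i,\sigma}\bigr)\,d\mu$ and a tail term $\int_{|\mu|>R}f(\mu)\varphi_{\mu,\sigma}\,d\mu-\alpha_0\varphi_{0,\sigma}$. Because $\varphi_{\mu,\sigma}$ is the translate $\tau_\mu\varphi_\sigma$ and the $L^q$-norm is translation invariant, the tail term has norm at most $2\alpha_0\|\varphi_\sigma\|_{L^q}<2\delta\|\varphi_\sigma\|_{L^q}$. For the bulk term, translation invariance gives $\|\varphi_{\mu,\sigma}-\varphi_{\mu_i,\sigma}\|_{L^q}=\|\tau_{\mu-\mu_i}\varphi_\sigma-\varphi_\sigma\|_{L^q}\leq\omega(\rho)$, where $\omega(\rho)=\sup_{|h|\leq\rho}\|\tau_h\varphi_\sigma-\varphi_\sigma\|_{L^q}$; by the integral form of Minkowski's inequality the bulk term is then bounded by $\omega(\rho)\sum_i\int_{Q_i}f\leq\omega(\rho)$. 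Altogether $\|f\star\varphi_\sigma-g\|_{L^q}\leq\omega(\rho)+2\delta\|\varphi_\sigma\|_{L^q}$, and letting first $\rho\downarrow 0$ (using continuity of translation in $L^q$ for $1\leq q<\infty$) and then $\delta\downarrow 0$ makes the right-hand side arbitrarily small, so $f\star\varphi_\sigma\in\overline{\cv}(\mathscr{V}_\varphi^\sigma)$.

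The main obstacle is the endpoint $q=\infty$: translation is not continuous on $L^\infty$, so $\omega(\rho)$ need not vanish and the bulk estimate collapses for a generic bounded $\varphi$. To close this case I would use that translations act continuously on $L^\infty$ precisely at (essentially) uniformly continuous functions, so the identical discretisation works once $\varphi_\sigma$—equivalently $\varphi$—is uniformly continuous almost everywhere, which is the setting relevant to \cref{thm:sup_norm}. A cleaner alternative for $1\leq q<\infty$ is a Hahn--Banach argument: testing against an arbitrary $\ell\in L^{q'}=(L^q)^*$ and applying Fubini (justified since $\int|\ell|\,(f\star\varphi_\sigma)\leq\|\ell\|_{L^{q'}}\|f\star\varphi_\sigma\|_{L^q}<\infty$) yields $\int\ell\,(f\star\varphi_\sigma)=\int f(\mu)\,\langle\ell,\varphi_{\mu,\sigma}\rangle\,d\mu\leq\sup_\mu\langle\ell,\varphi_{\mu,\sigma}\rangle$, whence $f\star\varphi_\sigma$ lies in every closed half-space containing $\mathscr{V}_\varphi^\sigma$ and thus in $\overline{\cv}(\mathscr{V}_\varphi^\sigma)$; this route, too, stumbles at $q=\infty$ owing to the unwieldy dual of $L^\infty$.
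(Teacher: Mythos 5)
Your primary argument is correct for $q\in[1,\infty)$ and takes a genuinely different route from the paper. The paper proves this lemma by contradiction via Hahn--Banach separation: if $f\star\varphi_\sigma\notin\overline{\cv}(\mathscr{V}_\varphi^\sigma)$, a continuous linear functional $\rho$ strictly separates it from the closed convex hull, and moving $\rho$ inside the integral $\int_{\R^d}f(\mu)\varphi_{\mu,\sigma}\,d\mu$ yields $\rho(f\star\varphi_\sigma)<\rho(f\star\varphi_\sigma)$ --- precisely the duality argument you sketch as your ``cleaner alternative,'' so your secondary route coincides with the paper's. Your main route (truncate the tail of $f$, discretise $\{|\mu|\leq R\}$, control the bulk error by $\omega(\rho)=\sup_{|h|\leq\rho}\|\tau_h\varphi_\sigma-\varphi_\sigma\|_{L^q}$) is constructive: it exhibits an explicit convex combination with an explicit error bound $\omega(\rho)+2\delta\|\varphi_\sigma\|_{L^q}$, which is arguably more informative than the paper's nonconstructive separation argument; the latter is shorter and treats all $q$ uniformly, but leaves the interchange $\rho\bigl(\int\cdot\bigr)=\int\rho(\cdot)$ (a vector-valued integration point) implicit, whereas you justify it by Fubini for $q<\infty$. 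One small point: for your norms to be finite you need $\varphi\in\mathscr{D}_q(\R^d)$ rather than merely $\mathscr{D}(\R^d)$; the lemma's statement has the same implicit strengthening and it holds in every application, so this is not a real defect.

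The genuine gap is the endpoint $q=\infty$, which the lemma asserts and which the proof of \cref{thm:sup_norm} relies on with $\varphi\in\mathscr{D}_\infty(\R^d)$ only bounded. You correctly diagnose that translation is not continuous on $L^\infty$, so $\omega(\rho)$ need not vanish; but your proposed repair --- assuming $\varphi$ (essentially) uniformly continuous --- adds a hypothesis the lemma does not contain, so as written your proof does not cover the stated result at $q=\infty$ (for instance, $\varphi=\1_{[0,1]^d}$ is admissible in \cref{thm:sup_norm} but excluded by your fix). You are also right that the duality route is delicate there: $(L^\infty)^*$ consists of finitely additive measures, $\mu\mapsto\varphi_{\mu,\sigma}$ typically has non-separable range in $L^\infty$, and there is no Fubini theorem to push $\rho$ through the integral; the paper's own proof does not address this either. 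At minimum you should state explicitly that your argument establishes the lemma for $q\in[1,\infty)$ and that the case $q=\infty$ requires a separate treatment; a proof there would have to exploit the boundedness of $f$ (which holds since $f\in\mathscr{D}_\infty(\R^d)$ when $q=\infty$) rather than any regularity of $\varphi$.
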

\begin{proof}
    Let us abbreviate $g= f\star\varphi_\sigma$. We argue by contradiction. Suppose that $g\notin\overline{\cv}(\mathscr{V}_\varphi^\sigma)$. As a consequence of the Hahn-Banach theorem, $g$ is separated from $\overline{\cv}(\mathscr{V}_\varphi^\sigma)$ by a hyperplane. More precisely, there is a continuous linear function $\rho:L^q(\R^d)\to\R$ such that $\rho(h)<\rho(g)$ for all $h\in\overline{\cv}(\mathscr{V}_\varphi^\sigma)$, see \cite[Theorem. 1.7]{Brezis2011}. On the other hand, we however have
    \begin{equation*}
        \rho(g)=\rho\left(\int_{\R^d} f(\mu)\varphi_{\mu,\sigma}\,d\mu\right)=\int_{\R^d}f(\mu)\rho\big(\varphi_{\mu,\sigma}\big)\,d\mu<\rho(g)\int_{\R^d}f(\mu)\,d\mu=\rho(g),
    \end{equation*}
    which is the desired contradiction. 
\end{proof}

We can now establish the main results of this article:

\begin{proof}[\Cref{thm:approximation_dbn,thm:sup_norm}]
    Let us first assume that $q\in(1,\infty)$ and prove the quantitative bound \eqref{eq:lp_bound}. To this end fix $\varepsilon>0$ and $m\in\N$. We first observe that, by Proposition~\ref{prop:conv_rate}, we can choose $\sigma>0$ sufficiently small such that $\big\|f-f\star\varphi_\sigma\big\|_{L^q}\leq\frac{\varepsilon}{2}$. Employing Lemma~\ref{lem:convolution_closure} and Corollary~\ref{cor:conv_lp_approx} with $A=\mathscr{V}_\varphi^\sigma$, we can find a $g_m\in\cv_m(\mathscr{V}_\varphi^\sigma)$ such that
    \begin{equation*}
        \|f-g_m\|_{L^q}\leq\big\|f-f\star\varphi_\sigma\big\|_{L^q}+\big\|f\star\varphi_\sigma-g_m\big\|_{L^q}\leq \frac{\varepsilon}{2}+\frac{\Upsilon_q}{m^{1-\frac{1}{\min(q,2)}}}\sup_{\mu\in\R^d}\big\|f\star\varphi_\sigma-\varphi_{\mu,\sigma}\big\|_{L^q}.
    \end{equation*}
    For the last term we bound
    \begin{align*}
        \sup_{\mu\in\R^d}\big\|f\star\varphi_\sigma-\varphi_{\mu,\sigma}\big\|_{L^q}&=\sup_{\mu\in\R^d}\left(\int_{\R^d} \left|\int_{\R^d} f(x)\big(\varphi_\sigma(y-x)-\varphi_\sigma(y-\mu)\big)\,dx\right|^q\,dy\right)^{\frac1q} \\
        &\leq\int_{\R^d} f(x)\sup_{\mu\in\R^d}\left(\int_{\R^d}\big|\varphi_{\sigma}(y-x)-\varphi_{\sigma}(y-\mu)\big|^q\,dy\right)^{\frac1q} \\
        &=\sup_{\mu\in\R^d}\big\|\varphi-\varphi_{\mu,1}\big\|_{L^q}\leq 2\|\varphi\|_{L^q},
    \end{align*}
    whence
    \begin{equation*}
        \|f-g_m\|_{L^q}\leq\frac{\varepsilon}{2}+\frac{2\Upsilon_p\|\varphi\|_{L^q}}{m^{1-\frac{1}{\min(q,2)}}}.
    \end{equation*}
    Finally, Lemma~\ref{lem:dbn_lp_bound} allows us to choose $p\in\dbn_\varphi(d,m,m+1)$ such that $\|g_m-p\|_{L^q}\leq\frac{\varepsilon}{2}$. Therefore, we conclude
    \begin{equation*}
        \|f-p\|_{L^q}\leq\varepsilon+\frac{2\Upsilon_q\|\varphi\|_{L^q}}{m^{1-\frac{1}{\min(q,2)}}}.
    \end{equation*}
    Since $\varepsilon>0$ was arbitrary, the bound \eqref{eq:lp_bound} follows.

    If $q=1$ or $q=\infty$, we use the fact that 
    \begin{equation*}
        \overline{\cv}(A)=\overline{\bigcup_{m\in\N}\cv_m(A)}
    \end{equation*}
    for any subset $A$ of either $L^1(\R^d)$ or $L^\infty(\R^d)$, respectively. This implies that, for each $\varepsilon>0$, we can find $m\in\N$ and $g_m\in\cv_m(\mathscr{V}_\varphi^\sigma)$ such that $\big\|f\star\varphi_\sigma-g_m\big\|_{L^q}\leq\frac{\varepsilon}{3}$. If $q=\infty$, we note that a uniformly continuous and integrable function is always bounded. Hence, in any case we can apply Proposition~\ref{prop:conv_rate} to find a $\sigma>0$ for which $\big\|f-f\star\varphi_\sigma\big\|_{L^q}\leq\frac{\varepsilon}{3}$. Finally employing Lemma~\ref{lem:dbn_lp_bound} as above, there is a $p\in\dbn_\varphi(d,m,m+1)$ such that
    \begin{equation*}
        \big\|f-p\big\|_{L^q}\leq\big\|f-f\star\varphi_\sigma\big\|_{L^q}+\big\|f\star\varphi_\sigma-g_m\big\|_{L^q}+\big\|g_m-p\big\|_{L^q}\leq\varepsilon,
    \end{equation*}
    as required.
\end{proof}

\subsection{Kullback-Leibler Approximation on Compacts}\label{sec:kl}

Let us begin by bounding the Kullback-Leibler divergence in terms of the $L^2$-norm:
\begin{lemma}[\cite{Zeevi1997}, Lemma 3.3]\label{lem:bounds_kl}
    Let $\Omega\subset\R^d$, $f:\Omega\to\R_+$, and $g:\R^d\to\R_+$ be probability densities. If there is an $\eta>0$ such that both $f,g\geq\eta$ on $\Omega$, then 
    \begin{equation*}
        \mathrm{KL}(f\|g)\leq \frac{1}{\eta} \|f-g\|_{L^2(\Omega)}^2.
    \end{equation*}
\end{lemma}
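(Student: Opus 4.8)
The plan is to control the integrand of the relative entropy pointwise and thereby collapse $\mathrm{KL}(f\|g)$ onto an $L^2$-type quantity. The natural starting point is the elementary inequality $\log t\le t-1$ for $t>0$, equivalently $t\log t\le t^2-t$. Writing the integrand as $f\log(f/g)=g\cdot\frac{f}{g}\log\frac{f}{g}$ and applying this bound with $t=f/g$ (legitimate since $g\ge\eta>0$ on $\Omega$, so the ratio is finite and the logarithm is harmless) gives the pointwise estimate $f\log(f/g)\le \frac{f^2}{g}-f$ on $\Omega$. Because $f$ vanishes outside $\Omega$, integrating yields
\[
  \mathrm{KL}(f\|g)=\int_\Omega f\log\frac{f}{g}\,dx\le\int_\Omega\frac{f^2}{g}\,dx-1,
\]
where I have used $\int_\Omega f=1$.

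Next I would re-express the right-hand side through the difference $f-g$. Expanding $\frac{(f-g)^2}{g}=\frac{f^2}{g}-2f+g$ and integrating gives $\int_\Omega\frac{f^2}{g}=\int_\Omega\frac{(f-g)^2}{g}+2-\int_\Omega g$, so that
\[
  \mathrm{KL}(f\|g)\le\int_\Omega\frac{(f-g)^2}{g}\,dx+\Big(1-\int_\Omega g\,dx\Big).
\]
The first term is a weighted $L^2$-distance, and this is exactly where the lower bound on $g$ does the work: since $g\ge\eta$ on $\Omega$ we have $\frac1g\le\frac1\eta$, whence $\int_\Omega\frac{(f-g)^2}{g}\,dx\le\frac1\eta\int_\Omega|f-g|^2\,dx=\frac1\eta\|f-g\|_{L^2(\Omega)}^2$. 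It is worth remarking that the hypothesis $f\ge\eta$ plays no role in this chain; only $g\ge\eta$ enters, consistent with the intuition that KL divergence is sensitive to smallness of the reference density $g$.

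The step I expect to be delicate is the correction term $1-\int_\Omega g$. Since $g$ is a probability density on all of $\R^d$ whereas $f$ is supported on $\Omega$, one only has $\int_\Omega g\le\int_{\R^d}g=1$, so this term is \emph{nonnegative} and cannot simply be dropped: any mass that $g$ assigns to $\R^d\setminus\Omega$ produces a genuine surplus. To land the clean bound as stated I would need $\int_\Omega g=1$, i.e. that $g$ is itself concentrated on $\Omega$ up to a null set, in which case the correction vanishes and $\mathrm{KL}(f\|g)\le\frac1\eta\|f-g\|_{L^2(\Omega)}^2$ follows immediately. I would therefore treat this support-matching condition as the crux of the argument — either inheriting it as an implicit assumption from the reference, or, in the application to \Cref{cor:krause}, arranging that the comparison density may be taken supported on $\Omega$. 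The elementary convexity analysis is routine; it is this reconciliation of the two supports that requires care.
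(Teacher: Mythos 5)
Your argument follows essentially the same route as the paper's: reduce $\mathrm{KL}(f\|g)$ to $\int_\Omega f^2/g\,dx-1$ via the elementary bound $\log t\leq t-1$, rewrite in terms of $(f-g)^2/g$, and invoke $g\geq\eta$. (The paper reaches $\int_\Omega f^2/g\,dx-1$ by first applying Jensen's inequality, $\mathrm{KL}(f\|g)\leq\log\int_\Omega f^2/g\,dx$, and then using $\log x\leq x-1$ on the resulting number; you apply the logarithm inequality pointwise instead. Both land in the same place.)

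The ``delicate step'' you single out is not a defect of your write-up but a genuine gap in the paper's own proof. The paper asserts the identity $\int_\Omega f^2/g\,dx-1=\int_\Omega(f-g)^2/g\,dx$, which, as you computed, holds only when $\int_\Omega g\,dx=1$; in general
\begin{equation*}
    \int_\Omega\frac{f^2}{g}\,dx-1=\int_\Omega\frac{(f-g)^2}{g}\,dx+\Big(1-\int_\Omega g\,dx\Big),
\end{equation*}
and the correction term is nonnegative, i.e.\ it sits on the wrong side to be discarded. Indeed the lemma as stated fails without a support assumption on $g$: take $\Omega=[0,1]$, $f=\1_{[0,1]}$, $g=\tfrac12\1_{[0,2]}$, $\eta=\tfrac12$; then $\mathrm{KL}(f\|g)=\log 2\approx 0.69$ while $\tfrac{1}{\eta}\|f-g\|_{L^2(\Omega)}^2=\tfrac12$. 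In the cited source both densities live on the same compact domain, so $\int_\Omega g\,dx=1$ and the identity is legitimate there; in the present paper the lemma is applied in \cref{cor:krause} with $g$ a DBN density on all of $\R^d$, so the honest conclusion of the argument is $\mathrm{KL}(f\|g)\leq\tfrac1\eta\|f-g\|_{L^2(\Omega)}^2+\big(1-\int_\Omega g\,dx\big)$, and the surplus mass $\int_{\R^d\setminus\Omega}g\,dx$ must be controlled separately. Your instinct to treat the support-matching condition as the crux is exactly right, as is your observation that the hypothesis $f\geq\eta$ is never used.
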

\begin{proof}
    We use Jensen's inequality and the elementary fact $\log x\leq x-1$, $x>0$, to obtain
    \begin{align*}
        \mathrm{KL}(f\|g)&=\int_\Omega\log\left(\frac{f(x)}{g(x)}\right)f(x)\,dx\leq\log\left(\int_\Omega\frac{f(x)^2}{g(x)}\,dx\right) \\
        &\leq \int_\Omega\frac{f(x)^2}{g(x)}\,dx-1=\int_\Omega\frac{(f(x)-g(x))^2}{g(x)}\,dx\leq \frac{1}{\eta} \|f-g\|_{L^2}^2,
    \end{align*}
    as required.
\end{proof}

Finally, we can prove the approximation bound in Kullback-Leibler divergence:
\begin{proof}[Corollary~\ref{cor:krause}]
    Extending the target density $f$ by zero on $\R^d\setminus\Omega$, the corollary follows from \cref{thm:approximation_dbn} upon showing that, for each $m\in\N$, we can choose the approximation $p\in\dbn_\varphi(d,m,m+1)$ in such a way that $p\geq\frac{\eta}{2}$ on $\Omega$. 

    To see this, we notice that $f$ is uniformly continuous since $\Omega$ is compact. Hence, \cref{thm:sup_norm} allows us to pick an $M\in\N$ such that, for each $m\geq M$, there is a $p_m\in\dbn_\varphi(d,m,m+1)$ with $\|f-p_m\|_{L^\infty}\leq\frac{\eta}{2}$. In particular, each of these DBNs satisfies $p_m\geq\frac{\eta}{2}$ on $\Omega$. Consequently, by Lemma~\ref{lem:bounds_kl} we obtain
    \begin{equation}\label{eq:kl_combine1}
        \inf_{p\in\dbn_\varphi(d,m,m+1)}\mathrm{KL}(f\|p)\leq\frac{8\|\varphi\|_{L^2}^2}{\eta m}\qquad\forall m\geq M.
    \end{equation}

    A crude upper bound on $\inf_{p\in\dbn_\varphi(d,m,m+1)}\mathrm{KL}(f\|p)$ for $m<M$ can be obtained choosing both zero weights and biases in \eqref{eq:rbm_density} as well as $p(v\,|\,h_1)=\varphi$ for each $h_1\in\{0,1\}^m$ in \eqref{eq:dbn_equation}. Hence, the visible units of the DBN have density $\varphi$. This gives
    \begin{equation}\label{eq:kl_combine2}
        \inf_{p\in\dbn_\varphi(d,m,m+1)}\mathrm{KL}(f\|p)\leq\mathrm{KL}(f\|\varphi)\leq\frac{1}{\eta}\|f-\varphi\|_{L^2(\Omega)}^2\qquad\forall\,m=1,\dots,M-1,
    \end{equation}
    again by Lemma~\ref{lem:bounds_kl}. Finally, combining \eqref{eq:kl_combine1} and \eqref{eq:kl_combine2} we get
    \begin{equation*}
        \inf_{p\in\dbn_\varphi(d,m,m+1)}\mathrm{KL}(f\|p)\leq\frac{M}{\eta m}\left(8\|\varphi\|_{L^2}^2+\|f-\varphi\|_{L^2(\Omega)}^2\right),
    \end{equation*}
    as required.
\end{proof}

\begin{remark}\label{rem:krause}
    Our strategy of the proof of the Kullback-Leibler approximation in  Corollary~\ref{cor:krause} through Lemma~\ref{lem:bounds_kl} differs from the one employed in \cite[Theorem 7]{Krause2013}. There, the authors built on the results of \cite{Li1999} and in the course of their argument claim that the following statement holds true: \\[1em]
    {\normalfont Let $f_m,f:\Omega\to\R_+$, $m\in\N$, be probability densities on a compact set $\Omega\subset\R^d$ with $f_m,f\geq\eta>0$. If $\mathrm{KL}(f\|f_m)\to 0$ as $m\to\infty$, then $f_m\to f$ in the norm $\|\cdot\|_{L^\infty}$.} \\[1em]
    This however does not hold as we illustrate by the following simple counterexample: Let $\Omega=[0,1]$ and consider the sequence of probability densities given by
    \begin{equation*}
        f_m(x)=C_m\left(1\wedge \left(mx+\frac{1}{2}\right)\right),\qquad m\in\N,
    \end{equation*}
    where $C_m=(1-1/(8m))^{-1}$ is chosen such that $\int_0^1f_m(x)\,dx=1$. Then we have $f_m(x)\to\1_{[0,1]}(x)=f(x)$ pointwise on $(0,1]$ but certainly not uniformly. It is straight-forward to check that $\big\|f_m-\1_{[0,1]}\big\|_{L^2}\to 0$ and since $f_m,f\geq 1/2$ on $\Omega$, we have $\mathrm{KL}(f_m\|f)\to 0$ as $m\to\infty$ by Lemma~\ref{lem:bounds_kl}.
\end{remark}

\section{Conclusion}

We investigated the approximation capabilities of deep belief networks with two binary hidden layers of sizes $m$ and $m+1$, respectively, and real-valued visible units. We showed that, under minimal regularity requirements on the parental density $\varphi$ as well as the target density $f$, these networks are universal approximators in the strong $L^q$ and Kullback-Leibler distances as $m\to\infty$. Moreover, we gave sharp quantitative bounds on the approximation error. We emphasize that the convergence rate in the number of hidden units is independent of the choice of the parental density.

Our results apply to virtually all practically relevant examples thereby theoretically underpinning the tremendous empirical success of DBN architectures we have seen over the last couple of years. As we alluded to in Remark~\ref{rem:gaussian}, the frequently made choice of a Gaussian parental density does not provide the theoretically optimal DBN approximation of a given target density. Since, in practice, the choice of parental density cannot solely be determined from an approximation standpoint, but also the difficulty of the training of the resulting networks needs to be considered, it is interesting to further empirically study the choice of parental density on both artificial and real-world datasets.

\bibliographystyle{alpha}

\bibliography{lp_approx.bib}

\end{document}